\def\eqref#1{equation~\ref{#1}}
\def\1{\bm{1}}
\DeclareMathAlphabet{\mathsfit}{\encodingdefault}{\sfdefault}{m}{sl}
\SetMathAlphabet{\mathsfit}{bold}{\encodingdefault}{\sfdefault}{bx}{n}
\definecolor{NDblue}{RGB}{12, 35, 64}
\definecolor{NDgold}{RGB}{174, 145, 66}
\definecolor{darkblue}{rgb}{0, 0, 0.5}
\definecolor{deepblue}{rgb}{0,0,0.5}
\definecolor{deepred}{rgb}{0.6,0,0}
\definecolor{deepgreen}{rgb}{0,0.5,0}
\definecolor{NDblue}{RGB}{12, 35, 64}
\definecolor{NDgold}{RGB}{174, 145, 66}
\definecolor{bestcell}{RGB}{220,255,220}
\definecolor{modelrow}{gray}{0.95}
\definecolor{promptgray}{RGB}{200,200,200}
\definecolor{promptblue}{RGB}{25,118,210}
\Crefname{assumption}{Assumption}{Assumptions}
  \let\Cref\crtCref
  \let\cref\crtcref
\newtheorem{theorem}{Theorem}
\lstdefinestyle{pythonstyle}{
    basicstyle=\ttfamily\footnotesize,
    language=Python,
    keywordstyle=\color{deepblue},
    stringstyle=\color{deepgreen},
    frame=single,
    showstringspaces=false,
}
\newtcolorbox{promptbox}[2][]{%
    enhanced,
    unbreakable,
    before skip=2mm,
    after skip=2mm,
    colback=darkblue!5!white,
    colframe=darkblue,
    coltitle=white,
    boxrule=0.5mm,
    sharp corners,
    arc=5pt,
    attach boxed title to top center={yshift=-3mm},
    boxed title style={
        enhanced,
        colback=NDgold,
        colframe=darkblue,
        arc=5pt,
        outer arc=5pt,
        boxrule=0pt,
    },
    title={\faLightbulb[solid]\space #2},
    fonttitle=\bfseries\color{white},
    #1
}
\newcommand{\SB}{$\mathrm{BERTScore}$}
\newcommand{\SZX}{$\mathrm{S(Z,X)}$}
\newcommand{\SZY}{$\mathrm{S(Z,Y)}$}
\newcommand{\SXY}{$\mathrm{S(X,Y)}$}
\title{Causally-Enhanced Reinforcement Policy Optimization }
\author{Xiangqi Wang}
\author{Yue Huang}
\author{Yujun Zhou}
\author{Xiaonan Luo}
\author{Kehan Guo}
\author{Xiangliang Zhang}
\affil{University of Notre Dame \par
  \href{mailto:xwang5@nd.edu}{\texttt{xwang76}},\;
  \href{mailto:yhuang6@nd.edu}{\texttt{yhuang37}},\;
  \href{mailto:yzhou4@nd.edu}{\texttt{yzhou25}},\;
  \href{mailto:xluo2@nd.edu}{\texttt{xluo6}},\;
  \href{mailto:kguo3@nd.edu}{\texttt{kguo2}},\;
  \href{mailto:xzhang33@nd.edu}{\texttt{xzhang33}}
  \texttt{@nd.edu}}
\begin{abstract}
Large language models (LLMs) trained with reinforcement objectives often achieve superficially correct answers via shortcut strategies, pairing correct outputs with spurious or unfaithful reasoning and degrading under small causal perturbations. We introduce \emph{Causally-Enhanced Policy Optimization} (CE-PO), a drop-in reward-shaping framework that augments policy optimization with a differentiable proxy for causal coherence along the generation pathway from prompt ($Z$) to rationale ($X$) to answer ($Y$). CE-PO estimates model-internal influence with Jacobian-based sensitivities, counterfactually hardens these signals to suppress nuisance cues, and fuses the resulting coherence score with task-accuracy feedback via a Minkowski (power-mean) combiner, exposing a single tunable between accuracy and coherence trade-off. The unified reward integrates with PPO/GRPO without architectural changes. Across reasoning benchmarks and causal stress tests, CE-PO reduces reward hacking and unfaithful chain-of-thought while improving robustness to correlation--causation flips and light counterfactual edits, all at near-parity accuracy. Experimental results across 4 datasets show that CE-PO improves accuracy over baselines by 5.49 $\%$ on average (up to 9.58 $\%$), while improving robustness to correlation–causation flips and light counterfactual edits.

\end{abstract}
\begin{document}
\maketitle

\section{Introduction}\label{sec:intro}


Outcome-centric reinforcement training of large language models (LLMs)—whether via standard policy optimization or preference-based tuning~\citep{Christiano2017deep,Ouyang2022training,Schulman2017PPO, Shao2024DeepSeekMath} optimizes \textit{what} answer is produced but places little pressure on \textit{how} that answer is obtained~\citep{sim2025lessons}. As a result, models routinely learn to “game” reward signals (e.g., by exploiting length, format, or lexical overlap) and to pair correct answers with unfaithful or spurious reasoning traces~\citep{wen2024language, Arjovsky2019invariant}. This failure mode is not merely theoretical: without intervention, we observe models can game both accuracy proxies and \emph{gradient-based sensitivity} surrogates—e.g., by guessing final answer or maximizing output length as in~\autoref{sec:experiments}. This mismatch between scoring for outcomes and expecting coherent reasoning harms robustness, causing severe and threatening hallucination.


We argue that robust reasoning requires not only accurate outputs but also \emph{causal coherence} along the generation pathway. By \emph{coherence} we mean that (i) the prompt genuinely informs the rationale, (ii) the prompt along with rationale in turn genuinely informs the final answer, and (iii) small, causally relevant edits to the prompt or rationale induce correspondingly appropriate changes downstream, whereas irrelevant edits do not. Concretely, let $(Z\!\rightarrow\!X\!\rightarrow\!Y)$ denote the intended chain from prompt ($Z$), to rationale ($X$), to answer ($Y$)~\citep{Bao2024NonCausalCoT}. If the model’s internal influence respects this order, chain-of-thought (CoT)~\citep{Wei2022chain} traces are more likely to be right for the right reasons~\citep{Ouyang2022training, Shao2024DeepSeekMath} and to generalize under distribution shift. However, current reinforcement objectives offer no internal signal that rewards such coherence, as previous literature on causal reward in LLM typically focus on causally indifferent to external factors instead of reasoning coherence~\citep{wang2025beyond}. Related efforts to promote faithful reasoning typically rely on external supervision or decoding-time control: rationale supervision and “right-for-the-right-reasons” regularizers require human explanations and usually target only input and generation; concept-bottleneck~\citep{KohLiang2017} and invariance/IRM approaches~\citep{pmlr-v130-kamath21a} enforce intermediate structure but do not address generative $prompt\!\to\!rationale\!\to\!answer$ flows; CoT verification and constrained decoding act \textit{post hoc}~\citep{Chi2025G2Reasoner, Heimersheim2024ActivationPatching} rather than shaping training. In contrast, we seek a \emph{differentiable, annotation-free, training-time} signal that explicitly rewards influence coherence across $Z\!\to\!X\!\to\!Y$ and integrates into standard policy optimization. 

Translating the above goal into a trainable signal raises three coupled issues. \emph{Measurability}—existing faithfulness tools (saliency, raw input-gradient attributions) are fragile and can fail basic sanity checks or depend weakly on the learned model, limiting their usefulness as training signals \citep{adebayo2018sanity}; gradient regularization that makes models “right for the right reasons” typically relies on human-provided rationales and targets only input $\!\to\!$ output links in discriminative settings, not the generative prompt $\!\to\! $ rationale $\!\to\!$ answer pathway \citep{ross2017right}, while causal-reasoning benchmarks show LLMs still conflate correlation and causation \citep{Jin2024corr2cause}. This motivates a \emph{differentiable, model-internal} proxy for directed influence along $Z\!\to\!X$, $X\!\to\!Y$, and $Z\!\to\!Y$ that is compatible with policy-gradient optimization. \emph{Robustness to spurious sensitivity}—naïve gradient signals are easily inflated by non-semantic factors (token frequency, position, format), encouraging shortcut learning \citep{geirhos2020shortcut} and \emph{reward hacking} under outcome-only RL objectives \citep{amodei2016concrete}; moreover, CoT can appear compelling yet remain unfaithful and brittle under small perturbations \citep{Wei2022chain,Bao2024noncausal,Chi2024Mirage}. Hence we require \emph{counterfactual hardening} that explicitly breaks semantic links and filters out nuisance directions rather than trusting raw sensitivities. \emph{Objective balance}—robustness-oriented methods (e.g., invariance penalties) often trade off in-distribution accuracy and can underperform ERM when mis-specified \citep{Arjovsky2019invariant,rosenfeld2020risks,pmlr-v130-kamath21a}, whereas optimizing only task reward preserves shortcut exploitation \citep{amodei2016concrete}. We therefore need a \emph{tunable} mechanism to balance accuracy against coherence during optimization, avoiding both over-regularization and reward hacking.

\noindent\textbf{Our approach.}
We introduce \underline{C}ausally-\underline{E}nhanced Policy Optimzation(CE-PO)~\footnote{Code implementation is available at: https://github.com/XiangqiWang77/causalrl}, a drop-in reward-shaping scheme that adds a \emph{Jacobian-based causal-coherence signal} to standard policy optimization and fuses it with task accuracy via a \emph{Minkowski (power-mean) combiner}. Concretely, we compute model-internal Jacobian influence signals for the $prompt \!\to\! rationale$ and $rationale\!\to\!answer$ links; harden them with a simple counterfactual procedure (removing the resulting nuisance directions by reshuffling) to reduce sensitivity to superficial cues; normalize and aggregate into a single \emph{coherence score}; and combine this score with accuracy using a power-mean with tunable weights/exponent, yielding a single reward that trades off coherence and accuracy. The unified reward is plugged into PPO/GRPO without architectural changes. In summary, our contributions are threefold:
\begin{itemize}[noitemsep, leftmargin=*]
  \item We propose a differentiable, model-internal \emph{DCE-proxy} (influence–coherence) reward along $Z\!\to\!X\!\to\!Y$, coupled with \emph{counterfactual residualization} (see \autoref{subsec: jacob1}) to suppress shortcut sensitivities and \emph{reward hacking}.
  \item We introduce a \emph{Minkowski (power-mean) combiner} (detailed in \autoref{subsec:RLTradeoff}) that exposes a tunable trade-off between task accuracy and causal-coherence signals, serving as a drop-in objective for PPO/GRPO with efficient reward computation.
  \item  CE-PO demonstrates consistent performance improvements, supported by comprehensive ablation analyses and competitive out-of-distribution generalization. In addition, our study highlights notable generation patterns and offers insights into model behavior.
\end{itemize}

\vspace{-5pt}
\section{Causal Enhanced Policy Optimization}
\label{sec:method}

\paragraph{Overview.}
In this section, we give implementation details of \emph{CE-PO} and formalize it. Given a rollout with prompt $Z$, rationale $X$, and answer $Y$, as shown in Figure~\ref{fig:wholepipeline}. The method unfolds in three steps that map one-to-one to the subsections below.
(i) \textbf{Raw influence signals} : we compute \emph{Base Jacobians}, i.e., local sensitivities of downstream span likelihoods with respect to upstream token embeddings, as differentiable, model-internal proxies for $Z\!\to\!X$, $X\!\to\!Y$, and for $Z\!\to\!Y$ removing mediator $X$.
(ii) \textbf{Counterfactual hardening} : Deploying raw Jacobian scores as rewards risks \emph{reward hacking}, where spurious factors (e.g., length) dominate the signal~\autoref{fig:genlen-training}. To suppress such shortcut and formatting sensitivities, we break semantic links (reshuffle/mismatch) and remove the induced nuisance directions from the raw signals.
(iii) \textbf{Reward normalization and fusion} We normalize Jacobian responses into a stable scalar \emph{coherence score} and then fuse it with task accuracy using a Minkowski (power-mean) combiner, yielding a unified reward that exposes a tunable accuracy–coherence trade-off and is optimized with PPO/GRPO.
This section defines each component, provides complexity notes, and clarifies how they compose into the training loop.

\begin{figure}[t]
    \centering
    \vspace{-10pt}
    \includegraphics[width=\textwidth]{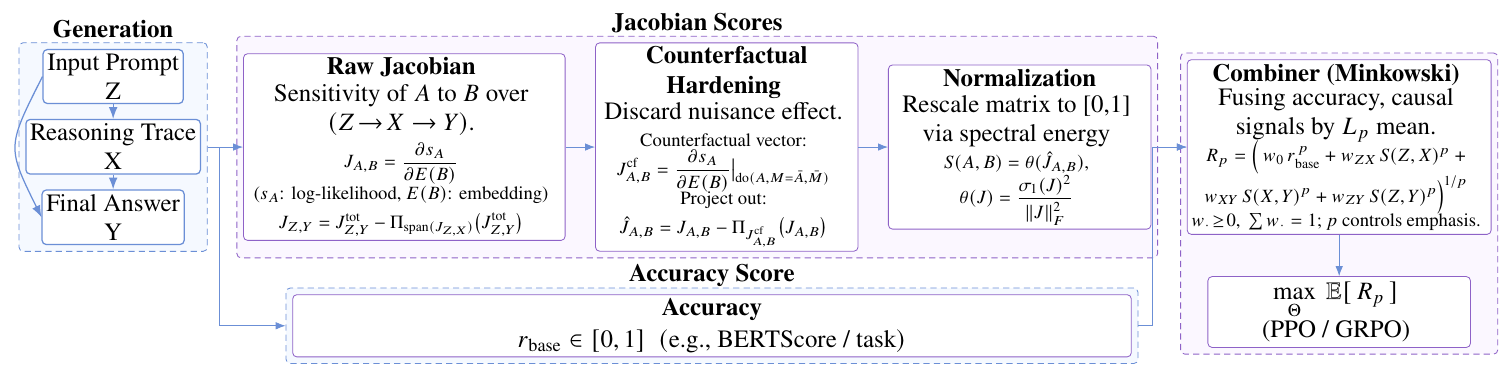}
    \vspace{5pt}
    \caption{\textbf{CE-PO pipeline (left $\rightarrow$ right).}
\emph{Generation:} $Z\!\to\!X\!\to\!Y$ (X/Y split by a special token; see Fig.~\ref{fig:finalanswer_instructions}).
\emph{Base Jacobians:} local sensitivities for $Z{\to}X$, $X{\to}Y$, $Z{\to}Y$.
\emph{Counterfactual hardening:} build source-side counterfactuals and project their subspace from the base signals to obtain hardened residuals. 
\emph{Normalization: } use spectral energy to convert Jacobian matrix into [0,1] scalar.
\emph{Fusion:} combine $S(A,B)$ signals with accuracy $r_{\text{base}}$ via a Minkowski combiner; optimize the unified reward with PPO/GRPO.}
    \vspace{-15pt}
    \label{fig:wholepipeline}
\end{figure}

\subsection{Jacobian Signals, and Causally-Enhanced Reward}
\label{subsec: jacob1}
We consider a differentiable language model $\Theta$ and our goal is, as previously described, to improve the accuracy on $Y$ (i.e., $r_{base})$ yet simultaneously to ensure the alignment of influence with this order: $Z$ should shape $X$, and $X, Z$ should shape $Y$. 
Our proposed method is illustrated in \autoref{fig:wholepipeline}, providing a causally grounded reward signal $R_p$ by combining $r_{base}$ and Jacobian-based scores measuring causal effects along $Z\!\rightarrow\!X\!\rightarrow\!Y$.  


\vspace{-10pt}
\paragraph{Base Jacobians.}
While $Y$-accuracy rewards \emph{what} answer is produced, it does not constrain \emph{how} the answer is obtained. To shape the reasoning pathway, we introduce a differentiable, model-internal \emph{Direct Causal Effect (DCE) proxy}~\citep{Pearl2009,VanderWeele2011} for coherence along $Z\!\to\!X\!\to\!Y$: small and causal changes in upstream tokens should induce appropriate changes downstream. Concretely, let $s_X$ and $s_Y$ denote the log-likelihoods of the rationale span $X$ and the answer span $Y$ posterior to $Z$ and $Z,X$ as in \autoref{eq:equation2}, and let $E(A)\!\in\!\mathbb{R}^{T_A\times d}$ be the token-embedding matrix for a text segment $A$ (length $T_A$, hidden size $d$) and $a_{<t}$ denotes the prefix strictly before index $t$.
\begin{equation}
s_X \;=\; \sum_{t\in X}\log p_\Theta\!\big(x_t \mid Z, x_{<t}\big), \qquad
s_Y \;=\; \sum_{t\in Y}\log p_\Theta\!\big(y_t \mid Z, X, y_{<t}\big)
\label{eq:equation2}
\end{equation}
We measure the influence from $A$ to the score of $B$ by the \emph{block Jacobian} $J_{AB}\!\triangleq\!\partial s_B/\partial E(A)$. For the intended chain $Z\!\rightarrow\!X\!\rightarrow\!Y$, the \emph{Base Jacobians} (raw, unhardened sensitivities) are
\begin{equation}
J_{ZX}=\frac{\partial s_X}{\partial E(Z)}\in\mathbb{R}^{T_Z\times d},\qquad
J_{XY}=\frac{\partial s_Y}{\partial E(X)}\in\mathbb{R}^{T_X\times d},\qquad
J_{ZY}^{\mathrm{total}}=\frac{\partial s_Y}{\partial E(Z)}\in\mathbb{R}^{T_Z\times d}.
\label{eq:base-jacobians}
\end{equation}
In addition, to approximate the Direct Causal Effect (DCE) from $Z$ to $Y$, we need to isolate X effect out of $J_{ZY}^{\mathrm{total}}$. We remove the mediated path via projection: 
forming a via-\(X\) template from \(J_{ZX}\) (\(Z\!\to\!X\)) and the principal direction of \(J_{XY}\) (\(X\!\to\!Y\)), then subtracting its projection from the total effect  to obtain the direct effect:
\begin{equation}J_{ZY}=J_{ZY}^{\mathrm{total}}-\Pi_{\,J_{ZX}\odot\bar u}\!\left(J_{ZY}^{\mathrm{total}}\right),
\label{eq:isolate}\end{equation}
where $\bar u=\tfrac{1}{T_X}\sum_{t\in X}\tfrac{J_{XY}[t,:]}{\|J_{XY}[t,:]\|_2}$, 
and $\Pi_A(B)=\tfrac{\langle B,A\rangle_F}{\langle A,A\rangle_F}\,A$,  with $\langle B,A\rangle_F$ indicating the Frobenius inner product  between $A$ and $B$, and $\odot$ be Hadamard product.

We use the Jacobian as a local, direct causal effect \emph{proxy}: for any upstream–downstream pair $(A,B)\in\{(Z,X),(X,Y),(Z,Y)\}$, with mediators held at their observed values, $J_{AB}=\partial s_B/\partial E(A)$ quantifies how an infinitesimal perturbation to $A$’s embeddings shifts the downstream log-likelihood $s_B$—i.e., which directions in $A$ immediately move $B$, and by how much. Crucially, $J_{AB}$ is not equal to a causal DCE in the mediation sense; it is the \emph{first-order Taylor approximation} around the observed point~\citep{Pearl2001}. They are already informative enough and avoid the cost and instability of higher-order (Hessian/influence-function) estimators in deep/LLM settings~\citep{Pearlmutter1994,Martens2010,KohLiang2017,li2024influence}.

We use $J_{AB}$ to serve as the starting point for our \emph{coherence} signal. In the next subsections we apply \emph{counterfactual hardening} to remove nuisance directions and then normalize/aggregate the residual influence into a scalar coherence score that is fused with task accuracy.





\vspace{-10pt}
\paragraph{Counterfactual-hardened Jacobian score.}
Jacobian-based rewards, when computed directly on the base input, may conflate causal content with nuisance factors such as style, formatting and token-frequency statistics. As a result, innocuous paraphrases or vacuous elongation can inflate the gradient norm and concentrate its top singular direction, enabling \emph{length-driven reward hacking} up to the token limit~\citep{gao2023scaling,rafailov2024scaling}. Empirically, as shown in \autoref{fig:genlen-training} (blue), the length hacking existing in non-counterfactual Jacobian rewards (Non-CF-GRPO conflates).

To suppress shortcut-driven sensitivities in the \emph{Base Jacobians}, we harden them with pair-specific counterfactuals.
For each link $(A,B)\in\mathcal{L}:=\{(Z,X),(X,Y),(Z,Y)\}$, let $M(A,B)$ denote mediators on $A{\to}B$ paths (empty except $M(Z,Y)=\{X\}$).
We break the semantic alignment between $A$ (and $M$) and $B$ by permuting the tokens of $A$ (and of $M$ when present), yielding $(\bar A,\bar M)$ that preserves surface statistics (length/frequency) while being approximately independent of $B$.
Let $s_B$ be the log-likelihood of span $B$ and $E(A)\!\in\!\mathbb{R}^{T_A\times d}$ the embedding matrix of $A$.
We compute the \emph{counterfactual Jacobian}
\begin{equation}
J_{AB}^{\mathrm{cf}}=\left.\frac{\partial s_B}{\partial E(A)}\right|_{(A,M)\mapsto(\bar A,\bar M)}.
\label{eq:contfac1}
\end{equation}
We   construct \(K\) \emph{source-side} counterfactuals  and calculate their Jacobians  by \autoref{eq:contfac1}, which are then averaged to have \(\bar J^{\text{cf}}_{AB}\)  (we set \(K\)=4, larger values generally improve results but incur higher computational cost).

Then, we extract the principal counterfactual subspace via the top-$k$ left singular vectors $U_{AB}$ of $J_{AB}^{\mathrm{cf}}$, define the projector $\Pi_{AB}(J)=U_{AB}U_{AB}^{\top}J$, following \citep{ravfogel2020null}. Taking the $J_{AB}$  in \autoref{eq:base-jacobians} and \autoref{eq:isolate}, we obtain the deconfounded (direct) block by removing these directions:
\begin{equation}
\widehat{J}_{AB}=J_{AB}-\Pi_{AB}\!\big(J_{AB}\big).
\label{eq:projector}
\end{equation}
Permuted signals expose a spurious nuisance subspace; projecting it out yields a deconfounded, causally aligned signal that resists surface correlations and reward hacking \citep{ravfogel2020null}.

As shown in Figure ~\ref{fig:genlen-training} (purple), the Non\mbox{-}CF\mbox{-}GRPO baseline drives the mean generation length to the token cap, revealing a length\mbox{-}based reward\mbox{-}hacking loop. With our counterfactual reshuffling and projection (CE\mbox{-}GRPO), the curve stays well below the cap and does not plateau, while reward improves. The mechanism is simple: source\mbox{-}side counterfactuals preserve surface statistics (length/frequency) while scrambling semantics \citep{ravfogel2020null}, so the counterfactual Jacobian spans a nuisance subspace; orthogonally projecting the base Jacobian off this subspace—akin to Neyman orthogonalization \citep{oprescu2019orthogonal}—decouples the reward from length/format sensitivities and correlations \citep{chernozhukov2018double}. For the $Z{\to}Y$ link, we additionally subtract the via\mbox{-}$X$ component before projection (a first\mbox{-}order direct\mbox{-}effect correction), further reducing spurious incentives.


\begin{figure}[t]
  \centering
  \vspace{-5pt}
  \includegraphics[width=0.8\linewidth]{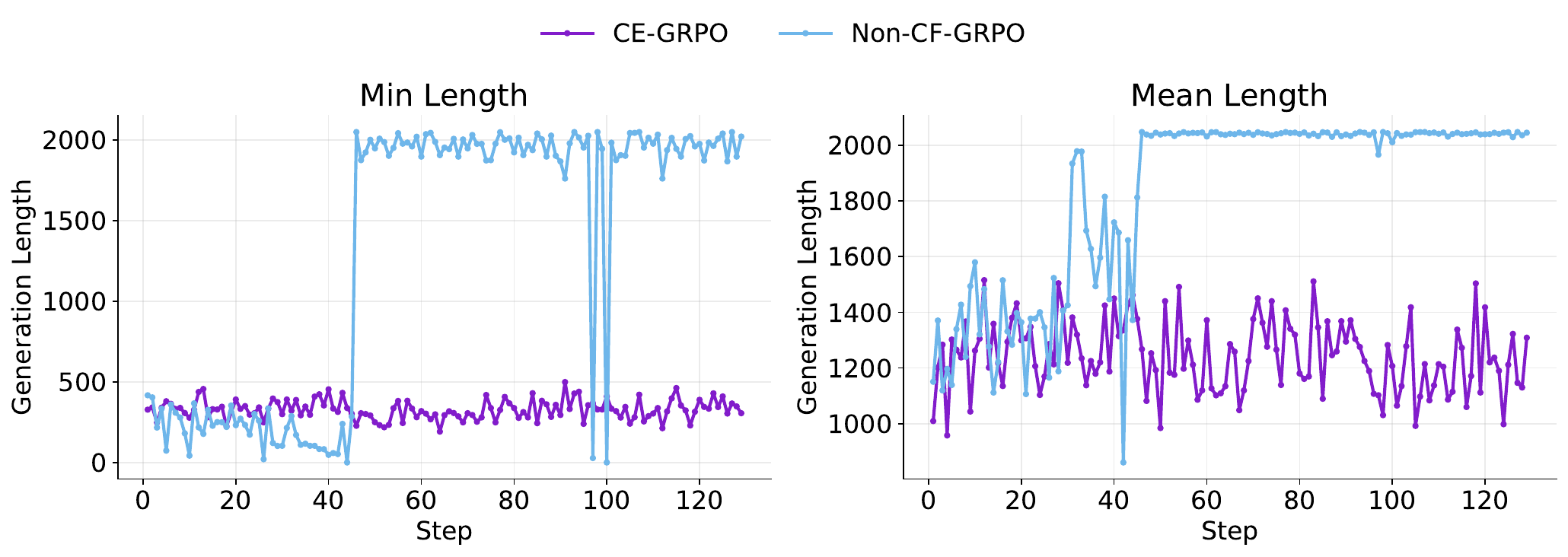}
  \vspace{0pt}
  \caption{Training trajectories comparison:    \textbf{Non-CF-GRPO} (GRPO with non-CF award) vs \textbf{CE-GRPO} (GRPO with our CE award)  on Qwen-3-4B-Thinking with \texttt{max\_tokens}=2048, in terms of minimum (left) and mean (right) generation length. Non-CF-GRPO curves plateau as generations approach the token limit, indicating length-driven reward hacking, while  CE-GRPO doesn't.}
  \vspace{-10pt}
  \label{fig:genlen-training}
\end{figure}

\vspace{-10pt}
\paragraph{Reward normalization and fusion}

Finally, we \emph{scalarize} the block to a score via the scale-free spectral energy share $S(A,B)=\phi(\widehat{J}_{AB})$, where $\phi(J)=\sigma_1(J)^2/\lVert J\rVert_F^2\in[0,1]$ to squeeze and standardize the process \citep{vershynin2018high}.
Concretely,
\begin{equation}
S(Z,X)=\phi\!\big(\widehat{J}_{ZX}\big),\qquad
S(X,Y)=\phi\!\big(\widehat{J}_{XY}\big),\qquad
S(Z,Y)=\phi\!\big(\widehat{J}_{ZY}\big).
\end{equation}

Let $r_{\text{base}}\!\in\![0,1]$ be a task accuracy score on $Y$ (e.g., semantic correctness, measured by   BERTScore \citep{zhang2019BERTScore} in our setting). We define the new causally grounded reward by  integrating accuracy and the three counterfactual-enhanced signals via a weighted power mean (Minkowski combiner)~\citep{bullen2013handbook},
\begin{equation}
R_p=\Big(w_0\,r_{\text{base}}^{\,p}+w_1\,S(Z,X)^{p}+w_2\,S(X,Y)^{p}+w_3\,S(Z,Y)^{p}\Big)^{1/p},
\end{equation}
with $p\in\mathbb{R}$, weights $w_i\!\ge\!0$, and $\sum_{i=0}^3 w_i=1$. As $p\!\to\!0$, $R_p$ approaches a weighted geometric mean (minimal optimization); larger $p$ emphasizes the largest channel, which respectively matches deconfounded Jacobian causal scores and BERTScore as in \autoref{subsec:RLTradeoff}. Using this $R_p$ in any common policy-gradient training  of LLM (e.g., PPO, GRPO), the goal is to maximize $\mathbb{E}[R_p]$.

\textit{Note of $r_{\text{base}}$:}
We set $r_{\text{base}}$ by 
BERTScore (i.e., the cosine similarity between $Y$ and ground truth in BERT embedding space), as it is a continuous measure between 0 and 1 for accuracy, aligning with the scale of the causal term. 
In contrast, 0/1 judgments are often sensitive to judging prompts and  are less compatible; mixing such binary signals with the causal term can induce oscillatory updates.
Ablation studies on replacing BERTScore with 0/1 rewards  
are provided in \autoref{subsec:ablation}. In addition, we also provide TRPO-form ~\citep{schulman2015trust} lower bound for CE-PO with details  in \autoref{sec:theory} and the discussion for training efficiency can be found in \autoref{app:additionaltechnique}. 

\paragraph{Training objective}
Let $R_p(Z,X,Y)\!\in\![0,1]$ denote the unified reward from Eq.~(7).
We treat $R_p$ as a \emph{scalar terminal reward} for each rollout $(Z,X,Y)$:
$r_T\!=\!R_p,\ r_t\!=\!0$ for $t{<}T$, and \textbf{do not backpropagate through $R_p$}
(standard policy-gradient; the Jacobian signals are used only to compute $R_p$).

\textbf{CE-PPO.}
With behavior policy $\pi_{\text{old}}$, importance ratio $w_t(\theta)=\pi_\theta(a_t|s_t)/\pi_{\text{old}}(a_t|s_t)$,
and GAE advantages $\hat A_t$ computed from the sparse reward, the clipped objective is
\[
\mathcal{J}_{\text{CE-PPO}}(\theta)
=\mathbb{E}\Big[\min\big(w_t(\theta)\hat A_t,\ \mathrm{clip}(w_t(\theta),1{-}\epsilon,1{+}\epsilon)\hat A_t\big)
-\beta\,\mathrm{KL}\big(\pi_\theta\,\|\,\pi_{\text{ref}}\big)\Big].
\tag{8}
\]
We optionally standardize $R_p$ per batch to stabilize advantages.

\textbf{CE-GRPO.}
For each prompt we sample $K$ candidates, compute $R_{p,k}$, and form
group-relative advantages $\tilde A_k=\big(R_{p,k}-\mu_{\text{grp}}\big)/(\sigma_{\text{grp}}+\varepsilon)$,
with $\mu_{\text{grp}},\sigma_{\text{grp}}$ the group mean/std.
The GRPO-style objective is
\[
\mathcal{J}_{\text{CE-GRPO}}(\theta)
=\mathbb{E}\Big[\tfrac{1}{K}\sum_{k=1}^K \min\big(w_k(\theta)\tilde A_k,\ \mathrm{clip}(w_k(\theta),1{-}\epsilon,1{+}\epsilon)\tilde A_k\big)
-\beta\,\mathrm{KL}\big(\pi_\theta\,\|\,\pi_{\text{ref}}\big)\Big].
\tag{9}
\]

\subsection{Reward Signals Trade-off}

Our RL objective (maximizing   $\mathbb{E}[R_p]$) blends two signals,  semantic accuracy (e.g., BERTScore) and causal coherence (counterfactual-enhanced Jacobian). These signals misalign at times, mirroring RLHF’s trade-off between surface correctness and  shortcut avoidance \citep{kalai2025languagemodelshallucinate,lightman2023let}. By defining $R_p$ via the Minkowski norm, we enable a dynamically tunable balance between the two signals. 

\label{subsec:RLTradeoff}
\begin{wrapfigure}[7]{r}{0.3\textwidth}
  \centering
  \vspace{-22pt}
  \includegraphics[width=0.3\textwidth]{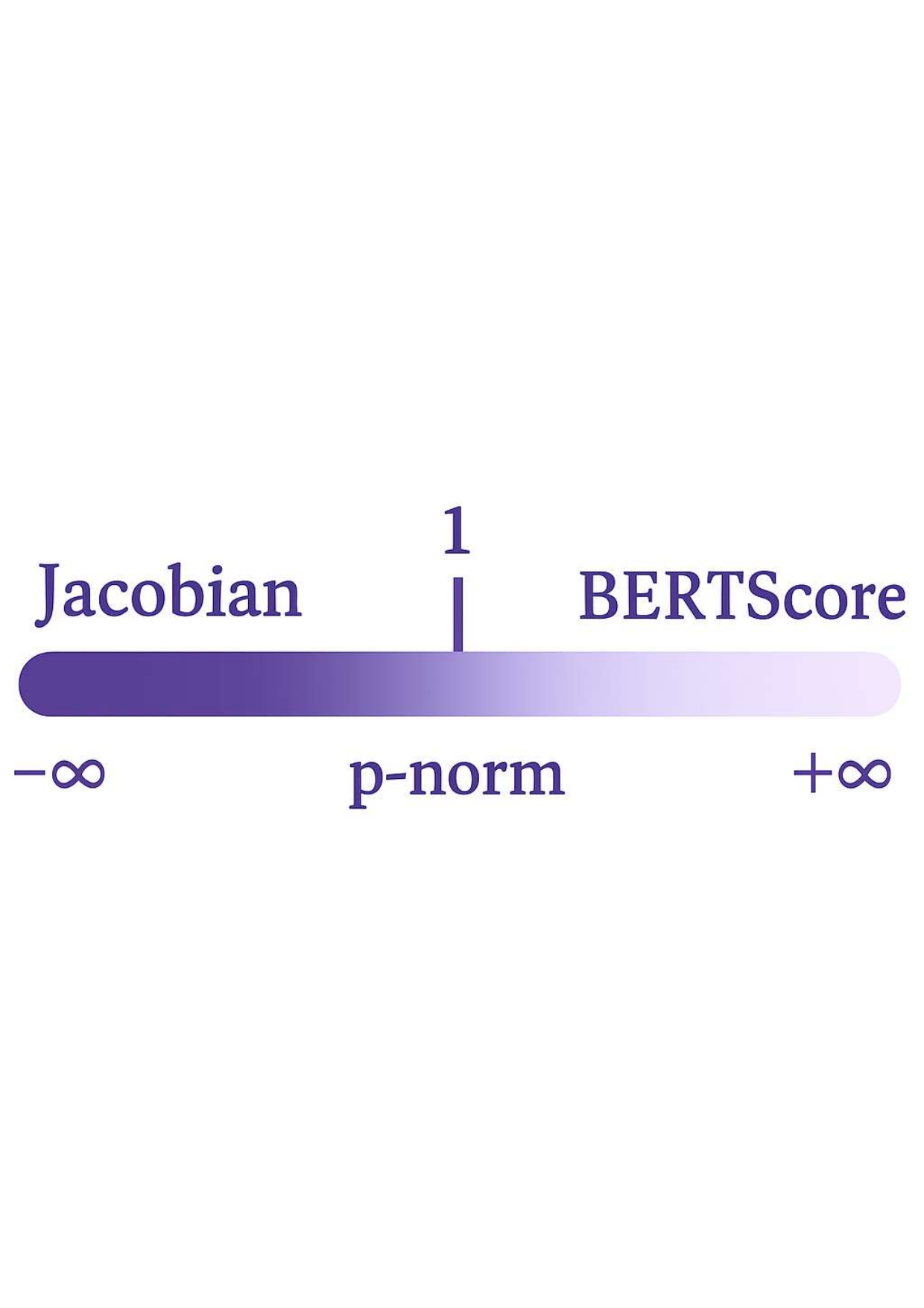}
  \vspace{-12pt}
  \caption{Tunable balance enabled by Minkowski
 $p$-norm interpolation in $R_p$.  
  }
  \label{fig:pnorm}
\end{wrapfigure}

\autoref{fig:pnorm} illustrates how the  Minkowski $p$-norm interpolation provides a smooth mechanism to mediate the trade-off between semantic accuracy and causal coherence. As $p \to +\infty$,  
the reward collapses to the larger signal,  typically
BERTScore, which is unprojected and numerically higher. This  
drives accuracy-focused optimization (as observed in all
models). Conversely, as $p \to -\infty$, the reward collapses to the smaller signal (e.g., when $p$ =$0$ the interpolation  equals geometric mean), which is usually the counterfactual Jacobian, lowered by residualization. This, in turn, emphasizes causal coherence. This multi-objective
behavior parallels prior work on balancing conflicting RL rewards \citep{won2019projection, roijers2013survey}. To illustrate how CE-PO combine both signals can thus enhance LLM reasoning capability to be both causally capable yet not template-following, we provide an LLM generation example in \autoref{tab:colorbox}.

\section{Experiments}
\label{sec:experiments}
In this section, we show performance of CE-PO on top of PPO and GRPO (noted as \textbf{CE-PPO} or \textbf{CE-GRPO}) against simple baselines, using BERTScore as the reward (denoted as PPO-BERTScore or GRPO-BERTScore) and vanilla LLMs, across representative LLMs and standard evaluation datasets.
\vspace{-10pt}
\paragraph{Experiment Setting.}

Training is performed on a single node with four A100 or H100 GPUs. Unless stated otherwise, we use a rollout group size of 6 per prompt, KL penalty of \(\lambda_{\mathrm{KL}}=0.001\) to stabilize optimization, and cap decoder outputs at $2048$ tokens and input within $512$ tokens. 
In determining $R_p$, extensive experiments indicate that a good configuration could be \(p=0.2\) and BERTScore, Jacobian scores weights, respectively, as \((w_0,w_1,w_2,w_3)=(0.7,0.1,0.1,0.1)\), which is set as default for comparison unless otherwise specified in ablation studies. We set learning rate of \(2\times10^{-6}\), batch size \(32\), and training for 10 epochs with temperature $0.6$. Validation interval is set as $10$ and checkpoints are saved per $100$ steps. Experiments are based on Verl~\citep{sheng2024hybridflow} framework with mild modification for separate thread for reward calculation, and we also set warm-up step = 10 for critic.

To avoid random guesses, all models are prompted to generate CoT reasoning \citep{Wei2022chain}. Evaluation is reported as accuracy, and the response correctness is determined via an LLM-as-Judge protocol \citep{zheng2023judging} using GPT-4o-mini (see prompt in \autoref{fig:binary_template}, which is designed to distinguish guess or contradictions especially 
for multi-choice questions).

We evaluate both standard instruction-tuned LLMs and “thinking” variants that emit explicit intermediate reasoning and instruction following. Although compute constraints preclude experiments on 70B-scale models, our evaluation covers compact to mid-sized systems that are widely used in research: Llama-3-8B-Instruct, Llama-3.2-3B-Instruct, Phi-3.5-mini-Instruct, Qwen3-4B-Thinking, and Qwen3-1.7B-Thinking  \citep{llama3_8b_ins_hf,llama3_2_3b_ins_hf,phi35mini_hf,qwen3_4b_thinking_hf,qwen3_1p7b_hf}. \footnote{Note that all experiments use Instruct or Thinking models, as our outputs must be cleanly formatted into $X$ and $Y$, which base models struggle to separate. Since Instruct models are typically harder to improve with RL, the observed gains further highlight the effectiveness of CE-PO.}


\noindent\textbf{Training and Validation Dataset.}\quad
We train and tune on four reasoning-heavy sets: \emph{BBEHCausal}, the causal reasoning split from BIG-Bench Extra Hard dataset, including counterfactual deduction or cause inference questions; \citep{BBEHGitHub}); \emph{CaseHOLD}, tasks centering on multiple-choice legal holdings given case citation context \citep{CaseHOLDHF}; \emph{MATHHARD}, the level-5 subset targeting multi-step mathematical reasoning with multiple-choice questions~\citep{hendrycksmath2021}; and \emph{IfQA}, an open-domain QA under counterfactual \emph{if}-clause presuppositions requiring hypothetical reasoning \citep{IfQAGitHub}. Note that all datasets are formatted as question answer pairs, examples of them are presented respectively in \autoref{tab:dataset-examples1} and \autoref{tab:dataset-examples2}.

\noindent\textbf{Testing Dataset.}\quad
For same-field evaluation, we use \emph{BBEHMATH} (challenging multistep arithmetic), where we filter shorter and more trivial questions and normalize alphabetic numerals to digits with unfolded calculation for comparability on this dataset~\citep{BBEHGitHub}. We further assess generalization with \emph{CLadder}, covering association/intervention/counterfactual queries \citep{CLadderHF}, \emph{LegalBench} (case-understanding split) \citep{guha2023legalbench} analyzing the cause or reason of legal case, and \emph{LogiQA}, a multiple-choice dataset emphasizing deductive logical reasoning sourcing from officer entrance exams~\citep{LogiQAHF}.



\vspace{-10pt}
\paragraph{Main Experiment Results.}

Under the training and validation dataset split detailed above, \autoref{fig:validation} shows CE-PPO delivering smooth, near-monotonic validation gains across five backbones and four validation sets: rapid early improvement followed by mild saturation, which is indicative of stable and standard RL optimization plot rather than overfitting. In \autoref{tab:mainresults}, across all five backbones, both causal variants outperform their non-causal baselines: CE-PPO/CE-GRPO improve the best non-CE baseline by 2.3 to 9.58 
points. Comparing CE-GRPO with the vanilla baseline, we observe gains of 5.8–6.0 points on Llama-3-8B and Llama-3.2-3B; the largest improvement appears on Qwen-3-1.7B-Thinking ($\approx 10$ points), whereas Phi-3.5-mini-Instruct shows the most marginal uplift. These patterns suggest that architectural and scale differences modulate the benefits of CE-GRPO. CE-GRPO is \emph{not} uniformly stronger than CE-PPO, while slightly higher on Qwen-1.7B/4B, Phi-3.5-mini, and Llama-3-8B, it lags behind CE-PPO on Llama-3.2-3B, indicating complementary strengths rather than strict dominance.

\begin{figure}
    \centering
    \includegraphics[width=\linewidth]{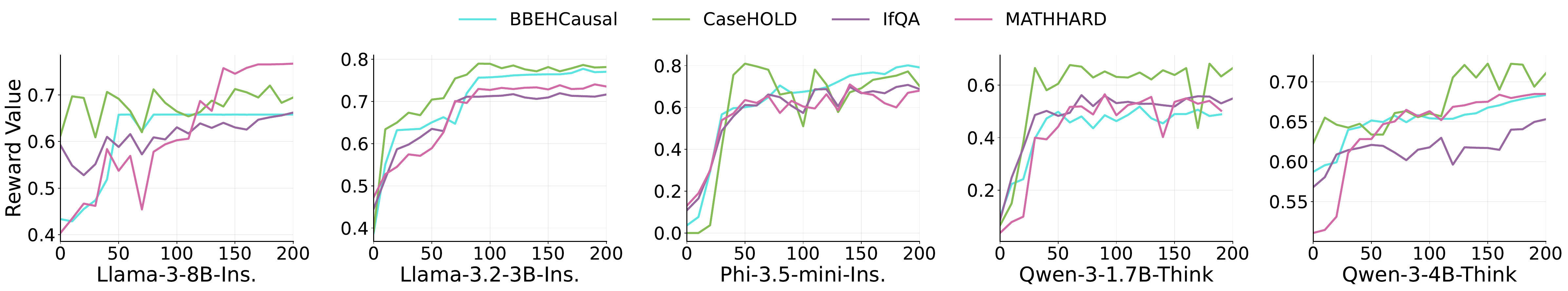}
    \vspace{-11pt}
    \caption{Reward curves of CE-PPO training on validation datasets across five models. }
    \vspace{-3pt}
    \label{fig:validation}
\end{figure}

\begin{table}[t!]
    \centering
    \small \vspace{-1pt}
    \caption{\textbf{Main results across models, RL methods, and datasets.} 
We report LLM-as-Judge accuracy (\%) 
under different RL variants: vanilla model , standard PPO/GRPO with BERTScore reward, and causal-enhanced (CE-PPO/CE-GRPO). We repeatly generate 4 times of each trained model, and thus report mean and deviation below.}
    \renewcommand{\arraystretch}{1}
    \setlength{\tabcolsep}{6pt}\vspace{-2pt}
    \resizebox{\linewidth}{!}{%
    \begin{tabular}{l l c c c c c}
\toprule[1pt]
\multirow{2}{*}{\textbf{Model}} 
  & \multirow{2}{*}{\textbf{RL Method}} 
  & \multicolumn{4}{c}{\textbf{Dataset (\%)}} 
  & \multirow{2}{*}{\textbf{Average}} \\
\cmidrule(lr){3-6}
  &  & \textbf{BBEHMATH} & \textbf{CLadder} & \textbf{LegalBench} & \textbf{LogiQA} & \\
\midrule

\rowcolor{gray!8}
\cellcolor{white}{\multirow{6}{*}[-1.5ex]{\centering\textbf{Qwen-3-1.7B-Thinking}}}
  & Vanilla            & 1.36 $\pm$ 1.96 & 34.77 $\pm$ 0.68 & 57.80 $\pm$ 2.59 & 50.00 $\pm$ 0.45 & 35.98\\
  & PPO-BERTScore   & 5.91 $\pm$ 0.00 & 37.27 $\pm$ 0.91 & 58.26 $\pm$ 1.95 & 52.50 $\pm$ 0.00 & 38.48\\
\rowcolor{gray!8}
\cellcolor{white}{}
  & GRPO-BERTScore  & 3.64 $\pm$ 1.29 & 35.00 $\pm$ 0.52 & 60.67 $\pm$ 0.94 & 57.04 $\pm$ 0.68 & 39.09\\
  & CE-PPO          & \textbf{10.45 $\pm$ 0.52} & \textbf{49.32 $\pm$ 0.68} & 60.67 $\pm$ 1.05 & \textbf{58.05 $\pm$ 1.14} & 44.62\\
\rowcolor{gray!8}
\cellcolor{white}{}
  & CE-GRPO         & 10.00 $\pm$ 1.29 & 44.77 $\pm$ 0.00 & \textbf{70.64 $\pm$ 0.00} & 56.82 $\pm$ 0.45 & \textbf{45.56}\\
\midrule

\rowcolor{blue!5}
\cellcolor{white}{\multirow{6}{*}[-1.5ex]{\centering\textbf{Qwen-3-4B-Thinking}}}
  & Vanilla            & 6.82 $\pm$ 0.45 & 42.95 $\pm$ 0.91 & 74.77 $\pm$ 1.95 & 80.45 $\pm$ 0.68 & 51.25\\
  & PPO-BERTScore   & 5.00 $\pm$ 1.05 & 45.00 $\pm$ 0.00 & 73.64 $\pm$ 0.47 & 80.00 $\pm$ 0.45 & 50.91\\ 
\rowcolor{blue!5}
\cellcolor{white}{}
  & GRPO-BERTScore  & 6.82 $\pm$ 0.45 & 40.68 $\pm$ 0.68 & 76.61 $\pm$ 0.65 & 83.86 $\pm$ 0.91 & 51.99\\
  & CE-PPO          & \textbf{9.55 $\pm$ 0.52} & 45.45 $\pm$ 0.68 & \textbf{79.34 $\pm$ 1.05} & 79.77 $\pm$ 0.91 & 53.53\\ 
\rowcolor{blue!5}
\cellcolor{white}{}
  & CE-GRPO         & 7.27 $\pm$ 0.00 & \textbf{49.09 $\pm$ 0.64} & 76.90 $\pm$ 0.00 & \textbf{84.09 $\pm$ 0.45} & \textbf{54.34}\\
\midrule

\rowcolor{gray!8}
\cellcolor{white}{\multirow{6}{*}[-1.5ex]{\centering\textbf{Phi-3.5-mini-Ins.}}}
  & Vanilla            & 6.13 $\pm$ 0.52 & 40.23 $\pm$ 0.68 & 68.35 $\pm$ 1.95 & 54.09 $\pm$ 0.45 & 42.20\\
  & PPO-BERTScore   & 6.59 $\pm$ 0.00 & 43.40 $\pm$ 0.87 & 66.97 $\pm$ 0.00 & 54.77 $\pm$ 0.00 & 42.93\\
\rowcolor{gray!8}
\cellcolor{white}{}
  & GRPO-BERTScore  & 5.45 $\pm$ 0.87 & 41.59 $\pm$ 0.64 & 68.81 $\pm$ 1.30 & 52.73 $\pm$ 0.45 & 42.15\\
  & CE-PPO          & \textbf{8.41 $\pm$ 0.00} & 42.95 $\pm$ 0.91 & \textbf{70.18 $\pm$ 1.95} & 55.90 $\pm$ 0.68 & 44.36\\
\rowcolor{gray!8}
\cellcolor{white}{}
  & CE-GRPO         & 6.81 $\pm$ 0.64 & \textbf{45.45 $\pm$ 0.45} & \textbf{70.18 $\pm$ 0.65} & \textbf{57.95 $\pm$ 0.00} & \textbf{45.10}\\
\midrule

\rowcolor{blue!5}
\cellcolor{white}{\multirow{6}{*}[-1.5ex]{\centering\textbf{Llama-3.2-3B-Ins.}}}
  & Vanilla            & 3.63 $\pm$ 0.91 & 30.00 $\pm$ 0.68 & 45.41 $\pm$ 0.47 & 41.59 $\pm$ 0.45 & 30.16\\
  & PPO-BERTScore   & 6.59 $\pm$ 0.68 & 33.63 $\pm$ 0.64 & 45.87 $\pm$ 0.00 & 38.41 $\pm$ 0.45 & 31.12\\
\rowcolor{blue!5}
\cellcolor{white}{}
  & GRPO-BERTScore  & 6.50 $\pm$ 0.45 & 31.36 $\pm$ 0.91 & 42.73 $\pm$ 2.57 & 44.09 $\pm$ 0.00 & 31.17\\
  & CE-PPO          & \textbf{10.67 $\pm$ 1.14} & \textbf{43.64 $\pm$ 0.68} & \textbf{55.05 $\pm$ 1.30} & \textbf{48.18 $\pm$ 0.91} & \textbf{39.38}\\
\rowcolor{blue!5}
\cellcolor{white}{}
  & CE-GRPO         & 7.95 $\pm$ 0.91 & 35.23 $\pm$ 0.00 & 54.13 $\pm$ 2.59 & 47.27 $\pm$ 0.45 & 36.15\\
\midrule

\rowcolor{gray!8}
\cellcolor{white}{\multirow{6}{*}[-1.5ex]{\centering\textbf{Llama-3-8B-Ins.}}}
  & Vanilla           & 7.05 $\pm$ 0.45 & 39.54 $\pm$ 0.91 & 61.93 $\pm$ 1.95 & 65.68 $\pm$ 0.64 & 43.55\\
  & PPO-BERTScore   & 5.27 $\pm$ 0.64 & 37.95 $\pm$ 0.00 & 61.93 $\pm$ 0.00 & 60.91 $\pm$ 0.45 & 41.52\\
\rowcolor{gray!8}
\cellcolor{white}{}
  & GRPO-BERTScore  & 10.91 $\pm$ 1.14 & 39.09 $\pm$ 0.91 & 57.80 $\pm$ 0.00 & 67.50 $\pm$ 0.00 & 43.83\\
  & CE-PPO          & \textbf{14.86 $\pm$ 1.59} & 44.77 $\pm$ 0.45 & \textbf{67.35 $\pm$ 0.79} & 68.40 $\pm$ 0.91 & 48.84\\ 
\rowcolor{gray!8}
\cellcolor{white}{}
  & CE-GRPO         & 13.64 $\pm$ 0.91 & \textbf{45.00 $\pm$ 0.64} & 66.97 $\pm$ 0.00 & \textbf{71.82 $\pm$ 0.45} & \textbf{49.36}\\
\bottomrule[1pt]
\end{tabular}
    }
    \label{tab:mainresults}
    \vspace{-10pt}
\end{table}

\vspace{-10pt}
\paragraph{Out of Distribution(OOD) Evaluation.}Trained exclusively on causal- and math-reasoning corpora, our CE-PO models are stress-tested for OOD generalization on TruthfulQA \citep{lin2021truthfulqa} (misconception resistant factuality), CodeMMLU \citep{manh2024codemmlu} (programming knowledge), and SuperGPQA \citep{du2025supergpqa} (expert level STEM QA), evaluating 200 uniformly random samples from each benchmark. As shown in \autoref{fig:ood_plots}, they deliver consistent—but task-dependent improvements over baselines: effects are strongest on structure-heavy code reasoning, moderate on STEM-oriented SuperGPQA, and smaller yet steady on TruthfulQA, suggesting that causal regularization chiefly benefits multi-step reasoning while preserving overall robustness.

\begin{figure*}[h]
  \centering
  \includegraphics[width=0.9\textwidth]{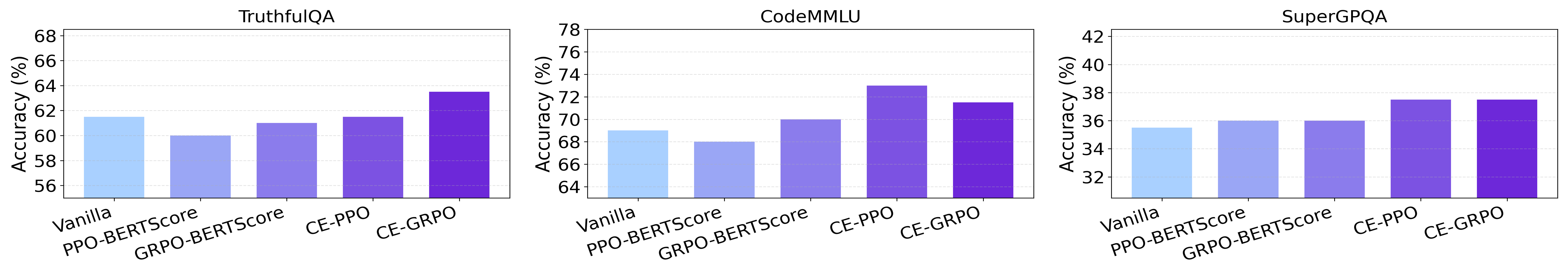} 
  \vspace{0pt}
  \caption{Qwen-3-4B-Thinking accuracy in OOD evaluation scenarios.} 
  \vspace{0pt}
  \label{fig:ood_plots}
\end{figure*}

\vspace{-10pt}
\paragraph{Ablation Studies.}
\label{subsec:ablation}
We test CE-PO's composite objective under a fixed setup (Llama-3-8B-Instruct, GRPO, max\_tokens=2048; datasets: BBEHMath, CLadder, LegalBench, LogiQA). The base reward is $r_{\text{base}}$ (BERTScore); coherence uses hardened link scores $S(Z,X)$, $S(X,Y)$, $S(Z,Y)$. We ablate the Minkowski exponent $p$ and weights $w$, drop individual links, add Gaussian noise to $r_{\text{base}}$, and swap BERTScore for an LLM-as-Judge. Ablation results in \autoref{tab:ablation_final} yield:

\textbf{\ul{Insight 1:}} Reward hacking would emerge once we remove any of $S(Z,X)$, $S(Z,Y)$ and $S(X,Y)$. Models under such flawed RL can't even excel the original models as observed in row 5 to 7 in \autoref{tab:ablation_final}. The likely cause is an incomplete constraint, i.e., the causal loop \(Z \rightarrow X \rightarrow Y\) is not closed, so the optimization ceases to be strictly benign.  With a detailed supporting plot provided in \autoref{fig:ABLATIONSUPPORT} of the reward signal and generation length, we can observe that initial removal breaks the prior balance (reward dips), and subsequent updates discover a length shortcut (length conflating).

\textbf{\ul{Insight 2:}} By selecting different $p$ values, we can observe the performances of $p$ as 10 identical to BERTScore only and $p$ as -2 to Jacobian scores only (i.e. $w$ as [0,1/3, 1/3, 1/3]). The same case of $p$ as 1, which reveals that tuning $p$ can dynamically tune the optimization objective.

\textbf{\ul{Insight 3:}} Under a perturbed reward (Gaussian noise $\sigma$=0.05), the model maintains satisfactory performance, indicating robustness of the training design. Using GPT - 4o - mini as an LLM-as-Judge (vs. BERTScore) yields performance similar to the original model, revealing what's prescribed in the \emph{Note of $r_{base}$} in \autoref{subsec: jacob1}.

\begin{table}[t!] \vspace{-0.05in}
\centering \small
\caption{Ablation Studies on Llama-3-8B-Ins.}
\rowcolors{2}{blue!3}{white} 
\resizebox{0.77\linewidth}{!}{
\begin{tabular}{lccccc}
\toprule[1pt]
\textbf{Ablation} & \textbf{BBEHMATH} & \textbf{CLadder} & \textbf{LegalBench} & \textbf{LogiQA} & \textbf{Average}\\
\midrule
$p$ = 10     &6.36 &39.09 &56.90  &67.27 & 42.91\\
$p$ = -2     &12.73 &43.63&65.52  &70.00 & 47.97\\
$p$ = 1 &10.00 &50.00 &60.34  &64.55 & 46.22\\
$w$=[0, 1/3, 1/3, 1/3] &10.91 &44.54 &61.93  &68.18 & 46.89\\
$w$=[4/5, 1/10, 1/10, 0]   & 5.45&23.63 & 40.37 &41.82 & 27.82\\
$w$=[4/5, 1/10, 0, 1/10]   &3.63 &30.00 & 45.87 &49.09 & 32.15\\
$w$=[4/5, 0, 1/10, 1/10]   &5.45 &31.82 &43.12 & 40.90& 30.32\\
$w$=[1/2, 1/6, 1/6, 1/6]   &8.18 &41.82 &67.24 &66.36 & 45.90\\
Perturbed Reward &7.27 &40.00 &65.52  &69.09 & 45.97\\
Judge Reward (binary) &6.36 &39.09 &60.34 &66.36 & 43.04\\
\midrule
\rowcolor{green!3} \textbf{CE-GRPO}  &13.64 & 47.27&66.97 &72.73 & 50.65\\
\bottomrule[1pt]
\end{tabular}}
\label{tab:ablation_final}
\vspace{-13pt}
\end{table}


\paragraph{Generation Pattern Analysis.}


\begin{figure}[h]
  \centering
  \vspace{-8pt}
  \includegraphics[width=0.95\linewidth]{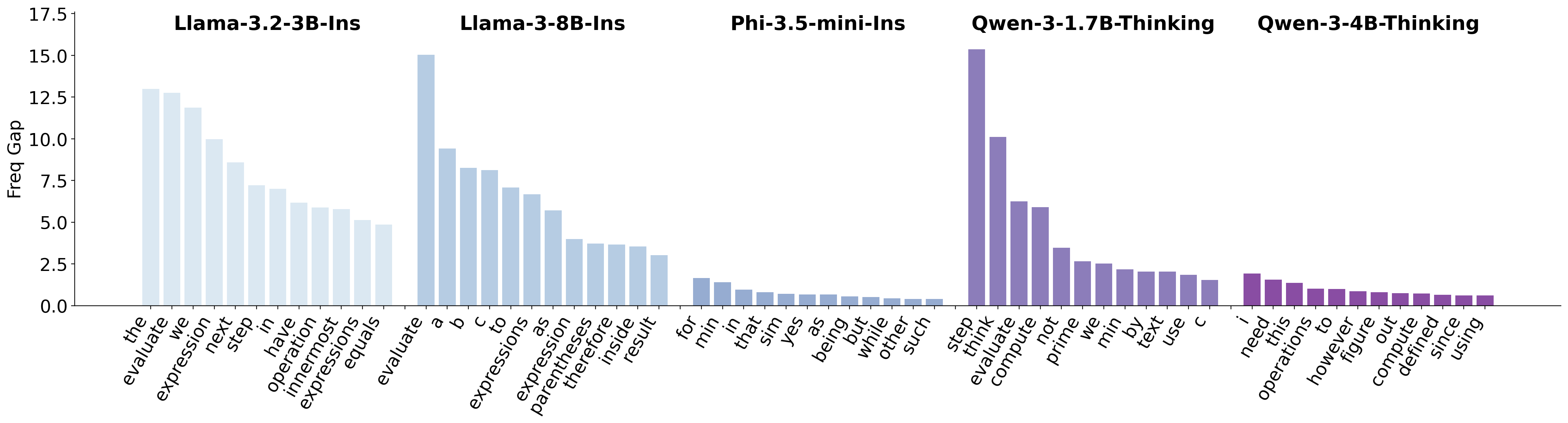}\vspace{-2pt}
  \caption{Frequency gap of generation per word where CE-GRPO exceeds BERTScore and Vanilla LLM (Top-12 per model). }
  \vspace{-8pt}
  \label{fig:ce_over_words}
\end{figure}

\begin{figure*}[t]
  \centering
  \includegraphics[width=\linewidth]{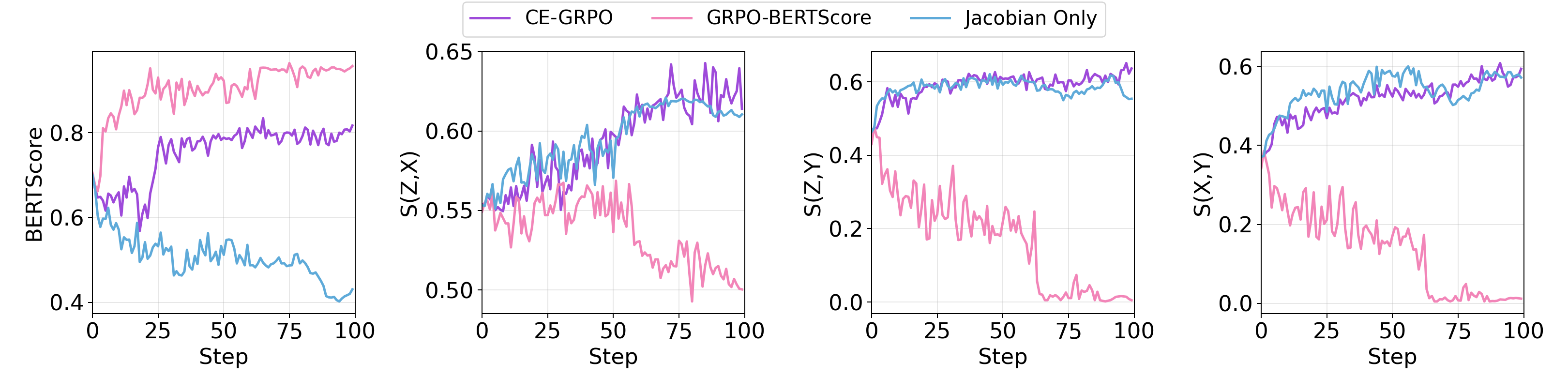}
  \vspace{-20pt}
  \caption{Training trajectories on Llama-3-8B-Instruct, reporting
   \SB, \SZX, \SZY, \SXY (left to right). Comparison is made between CE-GRPO and optimization with only causal signal (Jacobian) or accuracy signal (BERTScore), supporting the claim  of reward signal balance in \autoref{subsec:RLTradeoff}.   
   }
  \vspace{-15pt}
  \label{fig:curves-llama3-8b}
\end{figure*}

\autoref{fig:ce_over_words} shows that CE-GRPO does not induce uniform gains across all tokens but selectively amplifies reasoning or causal related words, suggesting that causal regularization guides the model toward tokens central to stepwise inference rather than surface templates. Complementarily, \autoref{fig:causal_pairplot} reveals the structural relation between metrics: $S(Z,X)$ is nearly independent of $S(X,Y)$ and $S(Z,Y)$, while the latter two are moderately correlated. This indicates that “evidence infusion” and “answer stability” provide orthogonal yet complementary signals. Their joint use, as in CE-GRPO, therefore supports coherence across the full $Z \!\to\! X \!\to\! Y$ pathway, rather than relying on any single proxy.

\begin{wrapfigure}[7]{r}{0.4\textwidth} 
    \centering
    \vspace{-32pt}
    \includegraphics[width=0.4\textwidth]{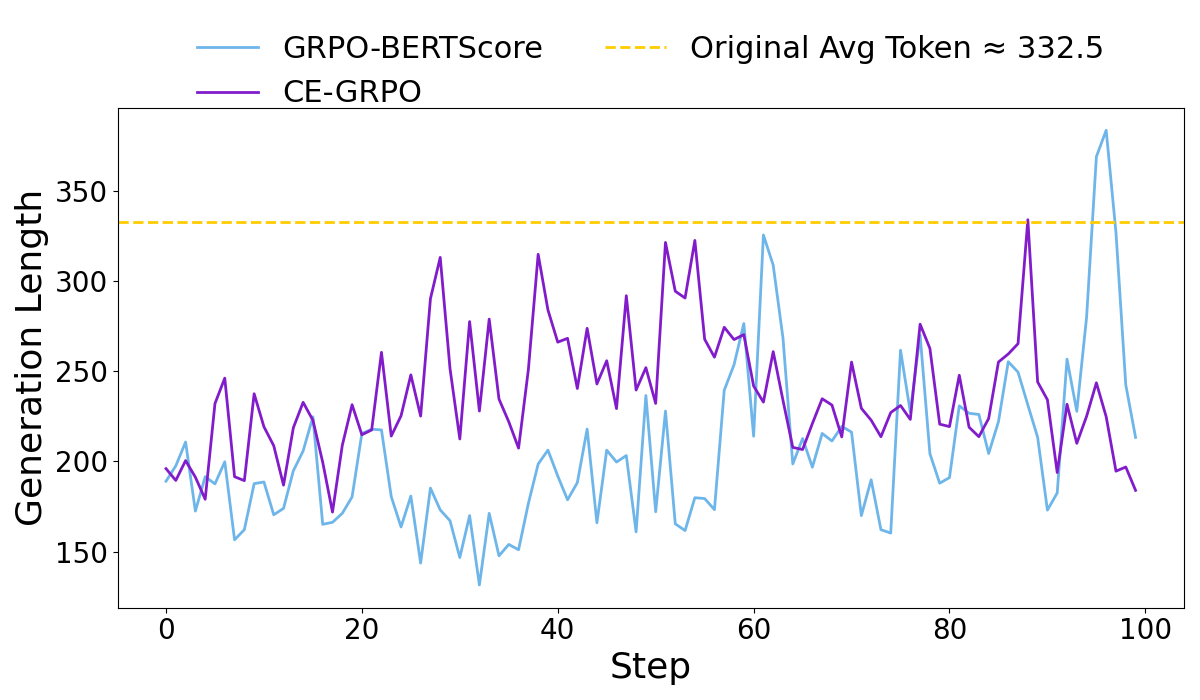}
    \vspace{-14pt}
    \caption{Response length comparison on Llama-3-8B-Ins}
    \label{fig:llama3-8b-response-length}
\end{wrapfigure}

Turning to sequence-level behavior, \autoref{fig:llama3-8b-response-length} reports the response lengths of CE-GRPO, GRPO-BERTScore, and the vanilla model on the training sets.
The means are closely matched, indicating our optimization is largely \emph{length-indifferent}.
CE-GRPO achieves better task performance without increasing the token budget beyond the baseline, suggesting the efficiency of CE-GRPO.

\begin{wrapfigure}[14]{r}{0.40\textwidth} 
    \centering
    \vspace{10pt} 
    \includegraphics[width=0.85\linewidth]{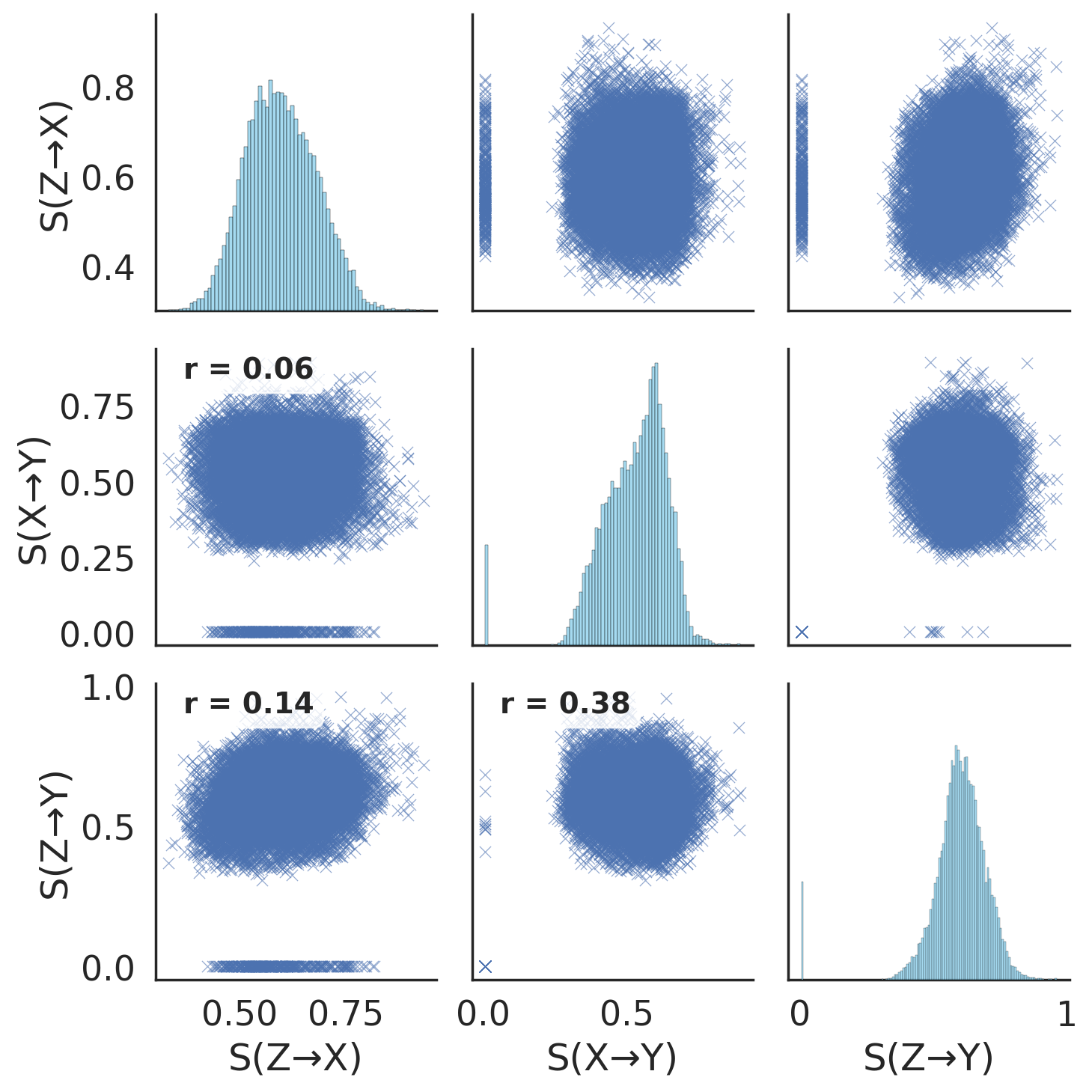}
    \vspace{5pt}
    \caption{Pairwise scatter matrix of causal metrics on Qwen-3-4B-Thinking.}
    \label{fig:causal_pairplot}
\end{wrapfigure}
Finally, the training reward trajectories in \autoref{fig:curves-llama3-8b} highlight three regimes: BERTScore-only optimization inflates \SB\ only while collapsing causal channels, Jacobian-only boosts sensitivities but remains unstable and harms task accuracy. This finding echoes prior work showing that proxy metric optimization (e.g., ROUGE) misaligns with human preferences~\citep{Stiennon2020}, while self-rewarded training inflates output length, underscoring the tension between verifiable rewards and reward-gaming behaviors~\citep{Yuan2025SRLM}. CE-GRPO achieves smooth improvements across $S(Z,X)$, $S(X,Y)$, and $S(Z,Y)$ while keeping \SB\ competitive. This pattern underscores CE-GRPO’s ability to sustain the causal chain from prompt to reasoning to answer, offering a practical safeguard against shortcut learning.

\vspace{-0.1in}
\begin{wrapfigure}[10]{l}{0.40\textwidth}
    \centering
    \vspace{-33pt}
    \includegraphics[width=0.98\linewidth]{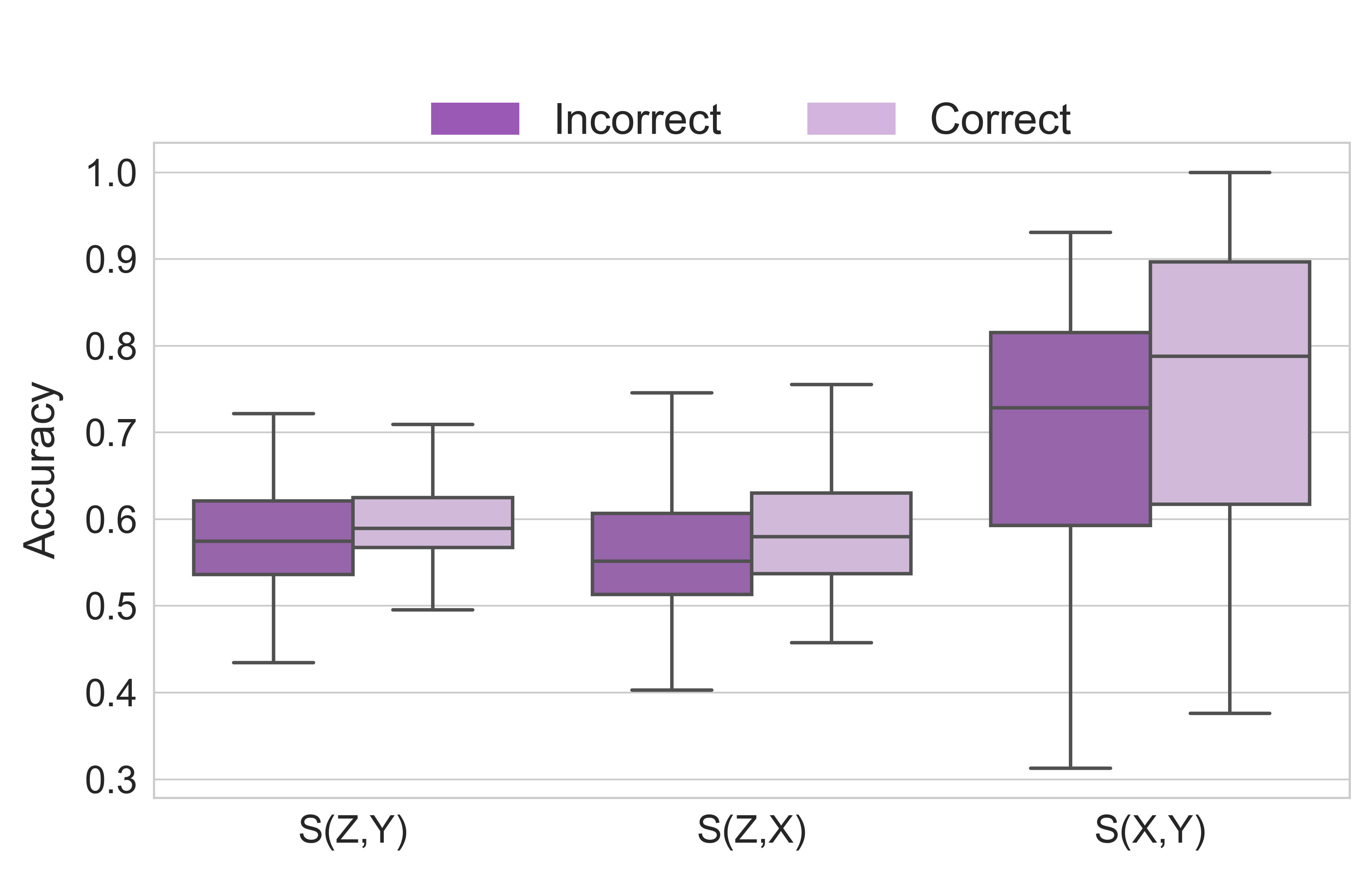}
    \caption{Boxplots of $S(Z,X)$, $S(Z,Y)$, and $S(X,Y)$ for LLM-as-Judge accuracy regarding their causal trajectory.}
    \label{fig:corr-boxplot}
    \vspace{-10pt}
\end{wrapfigure}

\paragraph{Jacobian Signals and Accuracy} 
\autoref{fig:corr-boxplot} shows the distributions of Jacobian-based scores for samples annotated by correctness with LLM-as-Judge with strict judging criteria provided in \autoref{fig:binary_template} conducted by GPT-4o. Annotations were collected on 1000 samples generated by Llama-3-8B-Ins. In total, 371 cases were labeled as consistent and 629 as inconsistent. To assess differences between the two groups, we conducted t-test with results reported in \autoref{tab:sabtest}. We evaluate whether higher $S(A,B)$ scores align with factual correctness, as validity of these causal signals requires separation under strict accuracy criteria. As in \autoref{tab:sabtest}, all three metrics show significantly higher means for the correct group ($p<0.01$), confirming that $S(A,B)$ faithfully tracks reasoning–answer validity beyond surface outputs, and by statistically aligns with cognitive accuracy.

\vspace{-0.1in}
\section{Conclusion} \vspace{-0.1in}
We propose counterfactual-enhanced policy optimization that couples task accuracy with continuous causal signals, yielding smoother training and consistent gains across models and datasets, especially on Thinking backbones. The approach is a drop-in for PPO/GRPO and reduces reward-hacking behaviors, improving stability during training and robustness at convergence. Future work based on CE-PO could possible work on scale interventions and automate the accuracy–causality balance; CE-PO opens new optimization avenues by incorporating richer causal signals (e.g., Hessian-based beyond simple Jacobian) and enabling adaptive schemes that tune this trade-off on the fly.

\newpage
\bibliography{iclr2025_conference}
\bibliographystyle{plainnat}

\appendix

\newpage
\section{Related Works}

\subsection{Reward Hacking in LLM}

Reward hacking occurs when models exploit imperfections in reward functions rather than achieving the intended objective~\citep{amodei2016concrete, everitt2017reinforcement, zhou2025evolving}. In large language models, this problem emerges prominently under alignment: proxy reward models can be gamed, leading to behaviors such as generating superficially persuasive but incorrect answers~\cite{wen2024language}, sycophancy~\cite{shrama2023towards}, and exploiting evaluator biases in LLM-as-judge settings~\cite{liu2023llms,wang2023large}. More recently, iterative refinement has revealed \emph{in-context reward hacking}, where models adapt outputs to maximize evaluation scores rather than genuinely improving reasoning quality~\cite{pan2023spontaneous,pan2024feedback}. Mitigation strategies typically involve improved supervision (e.g., rationale-level feedback~\cite{ross2017right}), evaluator debiasing, or structural interventions such as counterfactual augmentation~\cite{kaushik2020explaining} and invariance penalties~\cite{Arjovsky2019invariant}. However, these often rely on external annotations nor emphasized robustness–accuracy trade-offs. Our method introduces differentiable, counterfactual-enhanced causal signals as internal rewards, jointly optimizing accuracy and causal coherence to directly curb shortcut exploitation.


\subsection{Causal Inference and Reasoning in LLMs} 

There have been several studies centering on causal reasoning capability in LLMs. Corr2Cause finds poor correlation-to-causation transfer by pure LLM reasoning \citep{Jin2024corr2cause}, CoT often fails to be causally responsible for answers \citep{Bao2024NonCausalCoT}, and faithful CoT remains challenging even with fine-tuning or activation edits \citep{Tanneru2024CoTFaithfulnessHardness}. Methods that \emph{impose} causal structure are emerging (e.g., G$^2$-Reasoner combines knowledge and goal prompts to improve causal reasoning) \citep{Chi2025G2Reasoner}. Mechanistic approaches intervene directly in internal representations—editing factual circuits and testing hypotheses via causal/activation patching—to probe and modify pathways \citep{Heimersheim2024ActivationPatching}. There also several studies utilizing SCM~\citep{yu2025towards} and causal graph~\citep{sheth2025causalgraph2llm} for better reasoning. Yet most studies are \emph{post-hoc}. We build on this direction by using interventional, pathway-aware signals during post-training, and by explicitly aggregating them with task rewards to control the accuracy–causal-coherence trade-off.

\section{Theoretical Guarantees}
\label{sec:theory}
\paragraph{Setup.}
Let $\pi,\pi_{\mathrm{old}}$ be stationary policies, $\gamma\in(0,1)$ the discount factor,
and $d^{\pi}$ the normalized discounted state distribution.
Fix the baseline policy $\pi_{\mathrm{old}}$ and let $A(s,a)=A_{\pi_{\mathrm{old}}}(s,a)$
be its advantage under the CE-PO reward $r_{\mathrm{CE}}$ (the Minkowski combiner of all channels).
For each state $s$, denote by $\mathcal{N}_s\subset\mathbb{R}^{|\mathcal{A}|}$ 
the subspace spanned by \emph{nuisance directions} (e.g., verbosity/length) estimated 
via counterfactual perturbations, and let $P_s$ be the Euclidean orthogonal projector 
onto $\mathcal{N}_s$.
Write the action simplex
$\Delta^{|\mathcal{A}|}=\{p\in\mathbb{R}^{|\mathcal{A}|}:p\ge 0,\;\mathbf{1}^\top p=1\}$ with
tangent space $\mathcal{T}_s=\{v\in\mathbb{R}^{|\mathcal{A}|}:\mathbf{1}^\top v=0\}$.
Define the \emph{orthogonalized advantage}
\[
A_{\perp}(s,\cdot) := (I-P_s)\,A(s,\cdot),\qquad
\varepsilon_{\perp} := \max_{s}\,\|A_{\perp}(s,\cdot)\|_{\infty}.
\]
and let
\[
\alpha := \max_{s} D_{\mathrm{TV}}\!\bigl(\pi(\cdot|s),\pi_{\mathrm{old}}(\cdot|s)\bigr)
=\tfrac12\max_s\|\pi(\cdot|s)-\pi_{\mathrm{old}}(\cdot|s)\|_1.
\]
The TRPO-style surrogate is
\[
L_{\pi_{\mathrm{old}}}^{\mathrm{CE}}(\pi)
=\eta_{\mathrm{CE}}(\pi_{\mathrm{old}})+\frac{1}{1-\gamma}\,
\mathbb{E}_{s\sim d^{\pi_{\mathrm{old}}}}
\bigl[\;\langle A(s,\cdot),\,\pi(\cdot|s)-\pi_{\mathrm{old}}(\cdot|s)\rangle\;\bigr],
\]
noting that $\langle A(s,\cdot),\pi_{\mathrm{old}}(\cdot|s)\rangle=0$ for all $s$.

\paragraph{Assumptions and Propositions}
\begin{itemize}
\item[\textbf{P0}] \textbf{(Fixed reward at $\pi_{\mathrm{old}}$)}\;
All CE components used to construct $r_{\mathrm{CE}}$ (Jacobians, counterfactual
subspaces $\mathcal{N}_s$, projectors $P_s$, and channel scores) are computed at
$\pi_{\mathrm{old}}$ and held fixed while evaluating $\eta_{\mathrm{CE}}(\pi)$ over one update step.

\item[\textbf{A0}] \textbf{(Residualization on the action-simplex tangent)}\;
By construction of $r_{\mathrm{CE}}$ via counterfactual projection, the CE advantage is locally
invariant along nuisance directions at $\pi_{\mathrm{old}}$:
\[
\langle A(s,\cdot),\,v\rangle \;=\; 0 \quad \text{for all } v\in \mathcal{N}_s\cap\mathcal{T}_s.
\]
Equivalently, $P_sA(s,\cdot)=0$ and hence $A(s,\cdot)=A_{\perp}(s,\cdot)\in \mathcal{N}_s^\perp$.

\item[\textbf{A1}] \textbf{(Well-posedness \& bounded rewards)}\;
Base rewards and channel scores are in $[0,1]$ and CE weights sum to one, so $r_{\mathrm{CE}}\in[0,1]$.
The MDP is standard (finite or $\sigma$-finite), ensuring existence of $V^\pi,Q^\pi,A_\pi$.
\end{itemize}

\begin{theorem}[Orthogonalized TRPO Lower Bound for CE-PO]
For any $\pi$, with $\alpha=\max_s D_{\mathrm{TV}}\bigl(\pi(\cdot|s),\pi_{\mathrm{old}}(\cdot|s)\bigr)$,  
the CE-PO performance satisfies
\begin{equation}\label{eq:main}
\eta_{\mathrm{CE}}(\pi)\;\ge\;
L_{\pi_{\mathrm{old}}}^{\mathrm{CE}}(\pi)\;-\;
\frac{4\gamma}{(1-\gamma)^2}\;\varepsilon_{\perp}\;\alpha^2.
\end{equation}
\end{theorem}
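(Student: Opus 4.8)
The plan is to port the classical conservative-policy-iteration / TRPO surrogate-gap argument (Kakade--Langford; Schulman et al.) to the CE-PO objective, where the only genuinely new ingredient is that P0 and A0 let the global advantage magnitude be replaced by its nuisance-orthogonalized counterpart $\varepsilon_{\perp}$.

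First I would invoke P0, under which $r_{\mathrm{CE}}$ is an ordinary bounded reward over one update step, so that $\eta_{\mathrm{CE}}$, $V^{\pi_{\mathrm{old}}}$, $Q^{\pi_{\mathrm{old}}}$ and $A=A_{\pi_{\mathrm{old}}}$ are well-defined MDP quantities; A1 gives $r_{\mathrm{CE}}\in[0,1]$, hence $\|A\|_\infty\le\tfrac{1}{1-\gamma}$ and absolute convergence of every $\gamma$-discounted series. Then I would write the exact performance-difference identity
\[
\eta_{\mathrm{CE}}(\pi)=\eta_{\mathrm{CE}}(\pi_{\mathrm{old}})+\frac{1}{1-\gamma}\,\mathbb{E}_{s\sim d^{\pi}}\bigl[\langle A(s,\cdot),\pi(\cdot|s)\rangle\bigr],
\]
subtract $L_{\pi_{\mathrm{old}}}^{\mathrm{CE}}(\pi)$ (the same expression with $d^{\pi}$ replaced by $d^{\pi_{\mathrm{old}}}$), and use $\langle A(s,\cdot),\pi_{\mathrm{old}}(\cdot|s)\rangle=0$ to reduce the per-state integrand to $g(s):=\langle A(s,\cdot),\pi(\cdot|s)-\pi_{\mathrm{old}}(\cdot|s)\rangle$, noting $\pi(\cdot|s)-\pi_{\mathrm{old}}(\cdot|s)\in\mathcal{T}_s$. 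This yields the exact decomposition
\[
\eta_{\mathrm{CE}}(\pi)-L_{\pi_{\mathrm{old}}}^{\mathrm{CE}}(\pi)=\frac{1}{1-\gamma}\sum_{s}\bigl(d^{\pi}(s)-d^{\pi_{\mathrm{old}}}(s)\bigr)\,g(s).
\]

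The key step is the orthogonalization. By A0, $P_sA(s,\cdot)=0$, i.e.\ $A(s,\cdot)=A_{\perp}(s,\cdot)$, so $\|A(s,\cdot)\|_\infty\le\varepsilon_{\perp}$: the counterfactual residualization baked into $r_{\mathrm{CE}}$ --- the $U_{AB}U_{AB}^{\top}$ projection of \eqref{eq:projector} --- has already removed the verbosity/length directions from the reward at $\pi_{\mathrm{old}}$, so the advantage it induces cannot move along $\mathcal{N}_s$. H\"older's inequality then gives $|g(s)|\le\|A_{\perp}(s,\cdot)\|_\infty\,\|\pi(\cdot|s)-\pi_{\mathrm{old}}(\cdot|s)\|_1\le 2\alpha\,\varepsilon_{\perp}$. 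Combining this with the standard occupancy-measure perturbation bound $\|d^{\pi}-d^{\pi_{\mathrm{old}}}\|_1\le\frac{2\gamma}{1-\gamma}\alpha$ gives
\[
\bigl|\eta_{\mathrm{CE}}(\pi)-L_{\pi_{\mathrm{old}}}^{\mathrm{CE}}(\pi)\bigr|\le\frac{1}{1-\gamma}\,\|d^{\pi}-d^{\pi_{\mathrm{old}}}\|_1\,\max_{s}|g(s)|\le\frac{1}{1-\gamma}\cdot\frac{2\gamma\alpha}{1-\gamma}\cdot 2\alpha\varepsilon_{\perp}=\frac{4\gamma\,\varepsilon_{\perp}\,\alpha^{2}}{(1-\gamma)^{2}},
\]
and keeping only the favorable side yields \eqref{eq:main}.

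The hard part will be the occupancy-measure perturbation lemma $\|d^{\pi}-d^{\pi_{\mathrm{old}}}\|_1\le\frac{2\gamma}{1-\gamma}\alpha$: this is the only step that genuinely uses the Markov structure rather than a one-line H\"older estimate, and obtaining the tight constant $2\gamma/(1-\gamma)$ (rather than a cruder $2/(1-\gamma)$) requires a telescoping/coupling argument over the normalized discounted occupancy measure together with the uniform bound $D_{\mathrm{TV}}(\pi(\cdot|s),\pi_{\mathrm{old}}(\cdot|s))\le\alpha$. A secondary point is to confirm that A0 is consistent with P0 and A1 --- A0 is a postulated property of the constructed reward, and I would check that it is precisely what the left-singular-vector projector of \eqref{eq:projector} enforces at the linearization point $\pi_{\mathrm{old}}$ (local invariance of the CE advantage along $\mathcal{N}_s\cap\mathcal{T}_s$), after which everything downstream is a verbatim specialization of the CPI/TRPO bound with $\varepsilon_{\perp}$ in place of $\max_{s,a}|A(s,a)|$.
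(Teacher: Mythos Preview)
Your proposal is correct and follows essentially the same route as the paper: performance-difference lemma, subtraction of the surrogate to isolate the $(d^{\pi}-d^{\pi_{\mathrm{old}}})$-weighted term, the coupling bound $\|d^{\pi}-d^{\pi_{\mathrm{old}}}\|_1\le\frac{2\gamma}{1-\gamma}\alpha$, and the H\"older step with A0 replacing $\max_{s,a}|A(s,a)|$ by $\varepsilon_{\perp}$. Your identification of the occupancy-shift lemma as the only nontrivial ingredient and of A0 as the sole place where CE-PO's residualization enters matches the paper's argument exactly.
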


\begin{proof}
\textbf{Step 1: Performance-difference decomposition.}
The performance-difference lemma gives
\[
\eta_{\mathrm{CE}}(\pi)-\eta_{\mathrm{CE}}(\pi_{\mathrm{old}})
=\frac{1}{1-\gamma}\,
\mathbb{E}_{s\sim d^{\pi}}\,\bigl[\langle A(s,\cdot),\,\pi(\cdot|s)\rangle\bigr].
\]
Add and subtract the surrogate:
\[
\eta_{\mathrm{CE}}(\pi)
=L_{\pi_{\mathrm{old}}}^{\mathrm{CE}}(\pi)
+\frac{1}{1-\gamma}\,
\mathbb{E}_{s}\Big[(d^{\pi}-d^{\pi_{\mathrm{old}}})(s)\,
\langle A(s,\cdot),\,\pi(\cdot|s)\rangle\Big]
=:L_{\pi_{\mathrm{old}}}^{\mathrm{CE}}(\pi)-\frac{1}{1-\gamma}T,
\]
so it suffices to bound $|T|$.

\textbf{Step 2: Bounding the occupancy shift.}
By a standard coupling argument,
$\|d^{\pi}-d^{\pi_{\mathrm{old}}}\|_{1}\le \tfrac{2\gamma}{1-\gamma}\,\alpha$.
Thus
\[
|T|
\le \big(\max_{s}|\langle A(s,\cdot),\,\pi(\cdot|s)\rangle|\big)\;
\|d^{\pi}-d^{\pi_{\mathrm{old}}}\|_{1}
\le \frac{2\gamma}{1-\gamma}\,\alpha\;
\max_{s}|\langle A(s,\cdot),\,\pi(\cdot|s)\rangle|.
\]

\textbf{Step 3: Using CE-PO orthogonalization (A1).}
For each $s$, by A1 we have $A(s,\cdot)=A_{\perp}(s,\cdot)\in \mathcal{N}_s^\perp$, and
$\langle A(s,\cdot),\pi_{\mathrm{old}}(\cdot|s)\rangle=0$ (definition of advantage). Hence
\[
\langle A(s,\cdot),\,\pi(\cdot|s)\rangle
=\langle A_{\perp}(s,\cdot),\,\pi(\cdot|s)-\pi_{\mathrm{old}}(\cdot|s)\rangle
\le \|A_{\perp}(s,\cdot)\|_{\infty}\,\|\pi(\cdot|s)-\pi_{\mathrm{old}}(\cdot|s)\|_{1}
\le 2\,\varepsilon_{\perp}\,\alpha.
\]
Taking the supremum over $s$ yields
$\max_{s}|\langle A(s,\cdot),\,\pi(\cdot|s)\rangle|\le 2\,\varepsilon_{\perp}\,\alpha$, and therefore
\[
|T|\le \frac{2\gamma}{1-\gamma}\,\alpha\,(2\,\varepsilon_{\perp}\,\alpha)
=\frac{4\gamma}{1-\gamma}\,\varepsilon_{\perp}\,\alpha^2.
\]

\textbf{Step 4: Final bound.}
Plugging into Step 1,
\[
\eta_{\mathrm{CE}}(\pi)\ge
L_{\pi_{\mathrm{old}}}^{\mathrm{CE}}(\pi)-\frac{4\gamma}{(1-\gamma)^2}\,\varepsilon_{\perp}\,\alpha^2,
\]
which is inequality (\ref{eq:main}).
\end{proof}

\paragraph{Properties.}
\emph{(Tightening via nuisance removal, $\ell_2$ statement).}
Let $\varepsilon_{\mathrm{base}}^{(2)}:=\max_s\|A_{\mathrm{base}}(s,\cdot)\|_{2}$ be the corresponding constant under a non-residualized baseline reward, and define
$\varepsilon_{\perp}^{(2)}:=\max_s\|A_{\perp}(s,\cdot)\|_{2}$.
If there exists $\rho\in(0,1]$ such that
$\|P_s A_{\mathrm{base}}(s,\cdot)\|_{2}\ge \rho\,\|A_{\mathrm{base}}(s,\cdot)\|_{2}$ for all $s$
(\emph{nontrivial nuisance energy}), then by Pythagoras for orthogonal projectors,
\[
\varepsilon_{\perp}^{(2)} \;\le\; \sqrt{1-\rho^2}\;\varepsilon_{\mathrm{base}}^{(2)}.
\]
Since $\|\cdot\|_{\infty}\le \|\cdot\|_{2}$, we also have
$\varepsilon_{\perp}\le \varepsilon_{\perp}^{(2)}$, yielding a strictly smaller penalty constant in the $\ell_2$ analogue of~\eqref{eq:main} (and a looser but still improved constant in $\ell_\infty$ up to norm equivalence).

\emph{(Null penalty for hacking-only updates).}
If for some $s$ the per-state policy change lies entirely in $\mathcal{N}_s$, i.e.,
$\pi(\cdot|s)-\pi_{\mathrm{old}}(\cdot|s)\in \mathcal{N}_s$, then
$(I-P_s)(\pi-\pi_{\mathrm{old}})=0$ and hence
$\langle A_{\perp}(s,\cdot),\,\pi(\cdot|s)-\pi_{\mathrm{old}}(\cdot|s)\rangle=0$,
so that state contributes nothing to the penalty term in~\eqref{eq:main}.

\paragraph{KL-based corollaries.}
By Pinsker’s inequality $D_{\mathrm{TV}}(p,q)\le \sqrt{\tfrac{1}{2}\mathrm{KL}(p\Vert q)}$,
if $\delta:=\max_s \mathrm{KL}\bigl(\pi_{\mathrm{old}}(\cdot|s)\Vert\pi(\cdot|s)\bigr)$, then
\begin{equation}\label{eq:kl-cor}
\eta_{\mathrm{CE}}(\pi)\;\ge\;
L_{\pi_{\mathrm{old}}}^{\mathrm{CE}}(\pi)\;-\;
\frac{2\gamma}{(1-\gamma)^2}\,\varepsilon_{\perp}\,\delta.
\end{equation}
If only an \emph{expected} per-state KL is enforced,
$\mathbb{E}_{s\sim d^{\pi_{\mathrm{old}}}}\big[\mathrm{KL}(\pi_{\mathrm{old}}\|\pi)\big]\le \bar{\delta}$,
then~\eqref{eq:kl-cor} holds with $\delta=C\,\bar{\delta}$ for a method-dependent constant $C\ge 1$
(e.g., via ratio clipping or per-state caps used in TRPO/PPO).

\paragraph{From counterfactual Jacobians to action-space nuisance.}
Let $\mathcal{U}_s$ be a finite set of representation-space directions obtained from
counterfactual Jacobians (e.g., top singular vectors of counterfactual $J_{AB}$ blocks),
and let $g_s$ be a smooth channel-score functional (e.g., spectral energy share).
Write $G_s:=\nabla_{\pi(\cdot|s)}\, g_s$ for its Jacobian with respect to action probabilities.
Define
\[
\mathcal{N}_s\;:=\;\mathrm{span}\big\{\,G_s^\top u\;:\;u\in \mathcal{U}_s\,\big\}\;\cap\;\mathcal{T}_s.
\]
If each channel score is \emph{residualized} by projecting $J_{AB}$ onto $\mathcal{U}_s^\perp$,
then, at $\pi_{\mathrm{old}}$, the resulting CE reward satisfies
$\langle \nabla_{\pi} r_{\mathrm{CE}}(s,\cdot),\,v\rangle=0$ for all
$v\in \mathcal{N}_s\cap\mathcal{T}_s$, implying $P_sA(s,\cdot)=0$ (Assumption~\textbf{A0}).

\section{Additional Techniques}
\label{app:additionaltechnique}
\paragraph{Jacobian causal score is time and space-efficient in LLMs.}
We design the Jacobian-based causal reward to be \emph{practical at scale}: it reuses forward activations, avoids any explicit Jacobian/Hessian materialization, and replaces costly subspace operations with a constant-time projection. Concretely, we rely on three tricks.
\emph{\textcolor{blue!50!black}{(i) Local Jacobian via vJP (vector--Jacobian Projection).}}
Sensitivities are computed with one forward and two backward passes using vector--Jacobian products (reverse-mode autodiff), i.e., for a test direction $U$ we use $\mathrm{vJP}(s_B,E(A);U)=\langle J_{AB},U\rangle_F=J_{AB}^\top\!\mathrm{vec}(U)$ and the scalar sensitivity $\|J_{AB}^\top\!\mathrm{vec}(U)\|_2$; so no dense $J\!\in\!\mathbb{R}^{Td\times Td}$ (or Hessian) is ever formed; working memory stays at $\mathcal{O}(Td)$ for token embeddings \citep{Baydin2018,Pearlmutter1994}.
\emph{\textcolor{blue!50!black}{(ii) Residual projection.}} The counterfactual direction is removed with a single inner product: if $g$ is the local Jacobian direction and $\hat{c}$ the (reshuffled) counterfactual direction, we use $r=g-(g^\top\hat{c})\hat{c}$, avoiding SVD/subspace construction and extra buffers \citep{GolubVanLoan2013}.
\emph{\textcolor{blue!50!black}{(iii) Mixed precision.}} Forward/backward in FP16/bfloat16 with dynamic loss scaling provides wall-clock gains while roughly halving activation memory, without changing the algorithmic interface \citep{Micikevicius2018,NVIDIAA100Whitepaper}.

\noindent\textit{\textbf{Throughput and footprint}.} On a single modern GPU (e.g., A100 80\,GB) and Llama-3-8B-Instruct, our implementation returns a reward in $\approx 2$\,s, with negligible persistent memory beyond base activations and the $\mathcal{O}(Td)$ embedding cache.

\section{Additional Plots}

\begin{figure}[h]
  \centering
  \begin{subfigure}{0.485\linewidth}
    \centering
    \includegraphics[width=\linewidth]{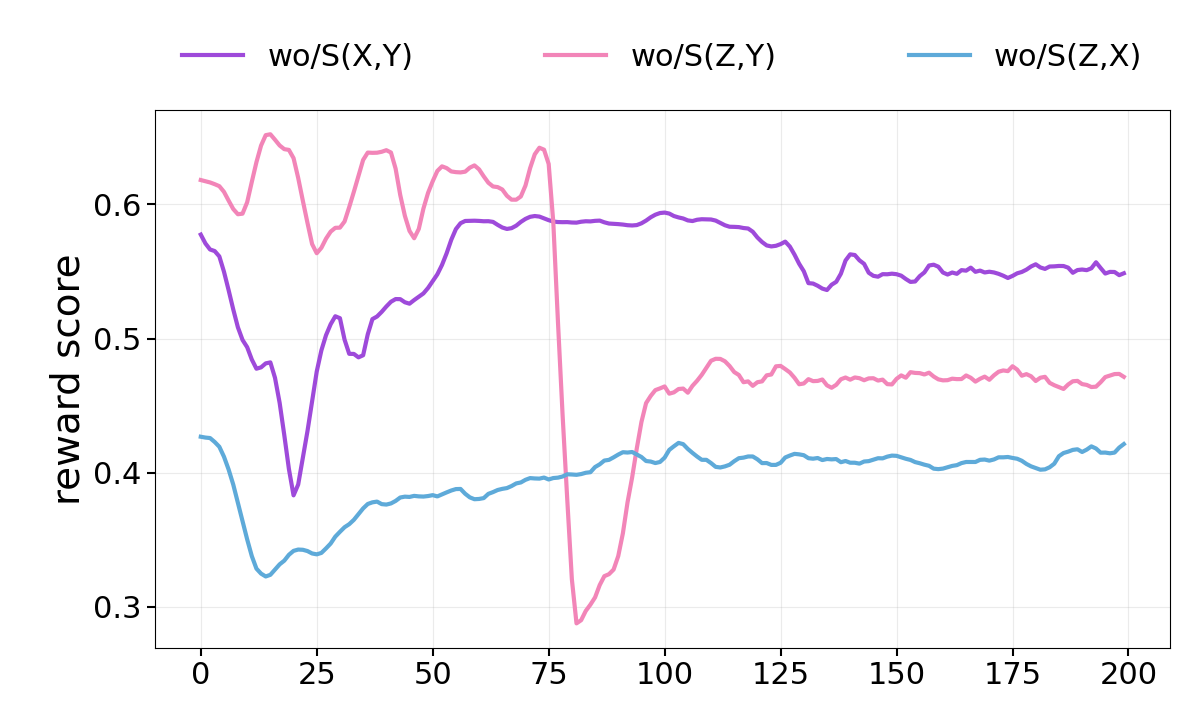} 
    \caption{Reward score}
    \label{fig:lengthmin}
  \end{subfigure}\hfill
  \begin{subfigure}{0.485\linewidth}
    \centering
    \includegraphics[width=\linewidth]{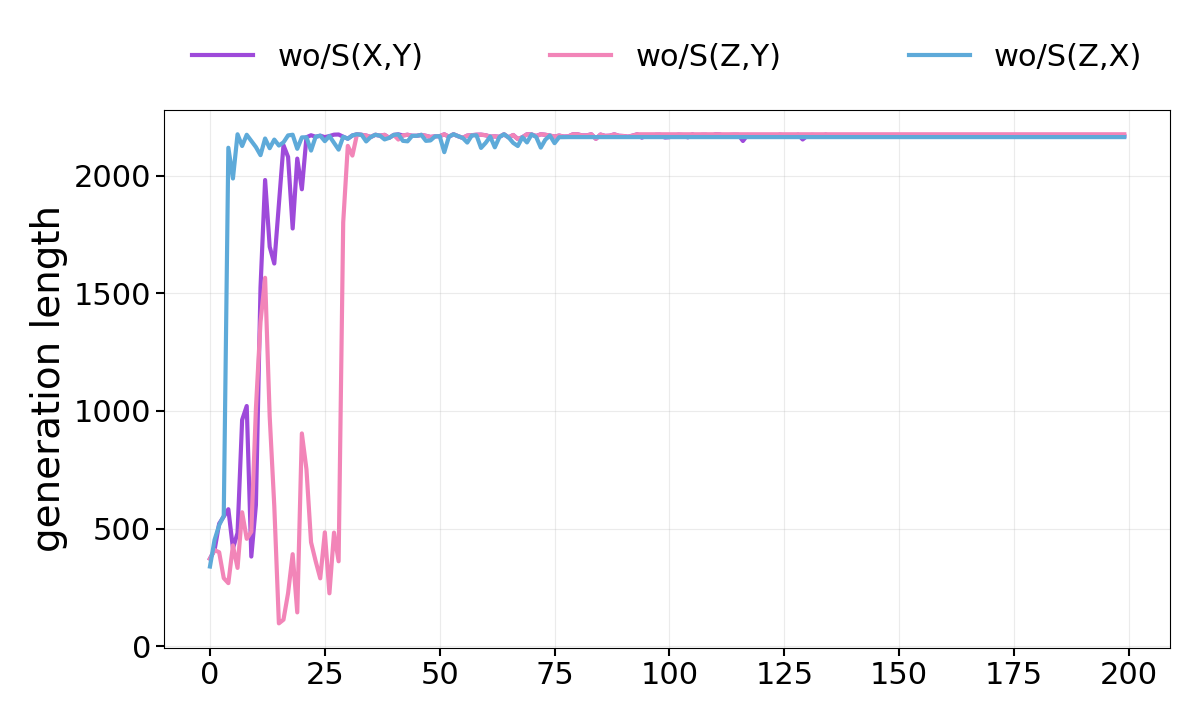} 
    \caption{Mean length}
    \label{fig:lengthmean}
  \end{subfigure}
  \vspace{-0.25em}
  \caption{This figure supplements the ablation analysis and demonstrates that omitting a single Jacobian term induces reward degradation accompanied by length inflation.}
  \vspace{-10pt}
  \label{fig:ABLATIONSUPPORT}
\end{figure}

\begin{tcolorbox}[
    colback=blue!5,
    colframe=black,
    coltitle=black,
    fonttitle=\bfseries,
    title=Example and Method Comparison,
    boxrule=0.6pt,
    sharp corners,
    breakable
]

\footnotesize
\textbf{Question.} \emph{Context: For husbands that don't set the alarm, the probability of ringing alarm is 90\%. For husbands that set the alarm, the probability of ringing alarm is 25\%. Question: For husbands that set the alarm, would it be less likely to see ringing alarm if the husband had not set the alarm?}

\medskip
\textbf{Ground Truth:} No

\medskip
\begin{center}
\setlength{\tabcolsep}{6pt}
\renewcommand{\arraystretch}{1.1}
\begin{tabular}{p{3cm} p{9cm}}
\toprule
\textbf{Method} & \textbf{Prediction (Reasoning)} \\
\midrule
BERTScore 
& Predicts ``Yes''; relies on surface overlap (\emph{...compares 25\% vs 90\% directly but ignores that the subject of ``less likely'' is ``not set''}). 
\\

Jacobian Only
& Predicts ``Yes''; observes 25\% $<$ 90\% but \emph{misplaces the subject, treating ``set'' as the focus of ``less likely'' instead of ``not set''}. 
\\

CE-GRPO 
& Predicts ``No''; correctly checks whether 90\% $<$ 25\% (\emph{false}) and concludes ``not set'' actually makes ringing more likely, matching ground truth. 
\\
\bottomrule
\end{tabular}
\end{center}

\medskip
\textbf{Takeaways.}
\begin{itemize}
\item \textbf{Jacobian Only} tends to \emph{template-follow}: latches onto ``less likely $\Rightarrow$ smaller \%'' and ignores subject/polarity flips, causing causal misread.
\item \textbf{BERTScore} measures overlap, not causal/conditional structure; it cannot reason about which condition the ``less likely'' modifies.
\end{itemize}
\label{tab:colorbox}

\end{tcolorbox}
\captionof{figure}{Comparison of methods on the alarm-setting counterfactual example on Qwen-3-4B-Thinking.}

\section{Broader Impact}

The proposed CE-PO framework has the potential to substantially improve the reliability and trustworthiness of large language models. By explicitly rewarding causal coherence in addition to task accuracy, the method reduces the tendency of models to exploit superficial cues or engage in reward hacking, which can lead to misleading or unfaithful reasoning. In practice, this enables LLMs to generate explanations and rationales that are not only correct in outcome but also grounded in meaningful reasoning processes—an essential property for high-stakes domains such as law, science, medicine, and education.

Beyond improving alignment, CE-PO can serve as a general methodology for building AI systems that are “right for the right reasons,” supporting transparency, auditability, and robustness under distribution shifts. This contributes to safer deployment of generative models in real-world decision-making pipelines. At the same time, emphasizing causal regularization may help mitigate harmful biases that arise when models rely on spurious correlations, thereby supporting more equitable AI systems.

\section{Prompt Templates}

We presented the prompt template for clear $Z$, $X$, $Y$ separation in \autoref{fig:finalanswer_instructions} and detailed instruction for LLM-as-Judge in \autoref{fig:binary_template}.

\begin{figure*}[h]
\begin{tcolorbox}[
  enhanced,
  colframe=blue!75!black,
  colback=white,
  coltitle=white,
  colbacktitle=blue!75!black,
  width=\linewidth,
  arc=2mm,
  auto outer arc,
  boxrule=0.5pt,
  left=10pt,
  right=10pt,
  drop shadow={black!50!white},
  top=10pt,
  bottom=10pt,
  title=\textbf{Prompt Instructions (Final Answer Tagging)},
  fonttitle=\bfseries,
  attach boxed title to top center={yshift=-2mm},
  boxed title style={sharp corners, size=small},
]
\small\ttfamily\raggedright
"Solve the question and generate in assigned format."\\
"Wrap your final answer between <finalanswer> </finalanswer> tags. "\\
"It's mandatory and required to not include anything after </finalanswer> tag. Don't let your generation process to redundant and lengthy."\\
"Be very clear in your explanation, and ensure the final answer is presented separately."\\
"Example of output: We analyze A, solve B and answer is C. So final answer is <finalanswer> C is correct </finalanswer>"
\end{tcolorbox}
\caption{Verbatim instructions for final-answer tagging.}
\label{fig:finalanswer_instructions}
\end{figure*}

\vspace{-10pt}
\begin{figure*}[h]
\begin{tcolorbox}[
  enhanced,
  colframe=brown!75!black,
  colback=white,
  coltitle=white,
  colbacktitle=brown!75!black,
  width=\linewidth,
  arc=2mm,
  auto outer arc,
  boxrule=0.5pt,
  left=10pt,
  right=10pt,
  drop shadow={black!50!white},
  top=10pt,
  bottom=10pt,
  title=\textbf{Prompt Template (Strict Judge)},
  fonttitle=\bfseries,
  attach boxed title to top center={yshift=-2mm},
  boxed title style={sharp corners, size=small},
]
\small
\noindent \textbf{Task} \\
Decide if the final answer \textbf{matches} the ground truth.

\medskip
\noindent \textbf{Rules (both must hold)} \\
1) \textbf{Final-answer equivalence}: extract the last explicit final answer and compare it to the ground truth \emph{after trimming whitespace and ignoring case}. \\
2) \textbf{Reasoning validity }: Validity ($Z\rightarrow X\rightarrow Y$): Z must substantiate X, and Z,X must substantiate Y. Accept only reasoning that is logically sound, evidence-backed, and contradiction-free; apply near-adversarial scrutiny—any gaps, guesses, or inconsistencies $\rightarrow$ 0.

\bigskip
\noindent \textbf{Question Z} \\
\texttt{\{question\}}

\medskip
\noindent \textbf{Ground Truth Answer } \\
\texttt{\{correct\_answer\}}

\medskip
\noindent \textbf{Provided Reasoning X+Y} \\
\texttt{\{reasoning\_process\}}

\bigskip
\noindent \textbf{Output (plain-text JSON, no code fences)}
\begin{verbatim}
{
  "judge_explanation": "<brief explanation>",
  "result": "<Yes or No>"
}
\end{verbatim}

\end{tcolorbox}
\caption{Prompt template for strict binary judgment requiring answer equivalence and valid reasoning.}
\label{fig:binary_template}
\end{figure*}






\begin{table}[ht]
\centering
\caption{Welch’s $t$-tests comparing $S(A,B)$ distributions between Incorrect and Correct groups.}
\label{tab:sab_ttest}
\begin{tabular}{lcc}
\toprule
Metric & $t$ & $p$-value \\
\midrule
$S(Z,X)$ & $3.55$ & $0.0005$ \\
$S(X,Y)$ & $3.23$ & $0.0014$ \\
$S(Z,Y)$ & $3.15$ & $0.0018$ \\
\bottomrule
\end{tabular}
\vspace{0.5em}
\label{tab:sabtest}
\end{table}

\section{Dataset Features}


\noindent
\begin{table}[h]
\centering
\small
\caption{Training Dataset}
\resizebox{0.8\linewidth}{!}{
\begin{tabular}{p{2.5cm} p{6.8cm} p{3.7cm}}
\toprule[1pt]
\textbf{Dataset} & \textbf{Question} & \textbf{Answer} \\ 
\midrule
\textbf{BBEHCausal} & Question: Alice (AF) and Bob (BF) each fire a bullet at a window, simultaneously striking the window. The window only shatters (WS) if it is hit by two bullets. Is Alice firing the bullet a sufficient cause for the window shattering? Reply based on the answer a logician would give. & No \\ \hline
\textbf{CasHOLD} &Context: limitations was equitably tolled fails because he did not raise this argument before the district court. See Hinton v. Pac. Enters., 5 F.3d 391, 395 (9th Cir.1993) (HOLDING). Grace’s remaining contentions lack merit. Select the correct holding (1 to 4): "holding that the burden to allege facts sufficient to establish jurisdiction resides with plaintiff",
"holding that the plaintiff bears the burden when relying on the discovery rule",
"recognizing the validity of the doctrine but holding no equitable tolling on the facts presented",
"holding that plaintiff bears the burden to timely allege facts supporting equitable tolling",
"holding that the burden is on the plaintiff to allege facts sufficient to establish jurisdiction" & 3: "holding that plaintiff bears the burden to timely allege facts supporting equitable tolling" \\ \hline
\textbf{MATHHARD} & There are thirty-five red, yellow, orange, and white marbles in a bag. If half the number of red marbles equals two less than the number of yellow marbles, equals a third the number of orange marbles, and equals a third of three more than the number of white marbles, how many red marbles are there?& 8 \\ \hline
\textbf{IfQA} &If Caroline Flack's mother had felt the name suited her, what would have been Caroline's name? & Caroline\\
\bottomrule[1pt]
\end{tabular}} \vspace{-1mm}
\label{tab:dataset-examples1}
\end{table}

\vspace{-3mm}
\noindent
\begin{table}[h]
\centering
\small
\caption{Testing Dataset}
\resizebox{0.8\linewidth}{!}{
\begin{tabular}{p{2.5cm} p{6.8cm} p{3.7cm}}
\toprule[1pt]
\textbf{Dataset} & \textbf{Question} & \textbf{Answer} \\ 
\midrule
\textbf{BBEHMATH} & Consider the operations $a \mathbin{\text{[]}} b=\begin{cases}(a-b),&a+b>0\\ -a-{-}b,&\text{otherwise}\end{cases}$, $a \mathbin{;} b=\begin{cases}(b-a)\ast b,&\lvert a-b\rvert<2\\ (a-b)\ast a,&\text{otherwise}\end{cases}$, $a \mathbin{!} b=\begin{cases}(2 \mathbin{\text{[]}} b)\ast a,&a>b\\ b,&\text{otherwise}\end{cases}$, and $a \mathbin{@} b=\begin{cases}(a \mathbin{!} b),&a-b>0\\ a\ast b,&\text{otherwise}\end{cases}$; define $A=\big(((-5 \mathbin{;} -7)+(\text{2} \mathbin{@} -4)) \mathbin{\text{[]}} ((-9 \mathbin{\text{[]}} -2) \mathbin{!} (6\cdot \text{1}))\big) \mathbin{\text{[]}} \big(((\text{7}-\text{6}) \mathbin{\text{[]}} (6+{-}6)) \cdot ((-2 \mathbin{\text{[]}} 9) \mathbin{;} (-1-5))\big)$, $B=\big(((\text{9} \mathbin{;} \text{7}) \mathbin{\text{[]}} (6\cdot \text{7})) \mathbin{\text{[]}} ((\text{2} \mathbin{!} -2)+(\text{3} \mathbin{\text{[]}} -6))\big) \mathbin{;} \big(((\text{1}-8)+(-5 \mathbin{@} -5)) \mathbin{;} ((-4-10) \mathbin{;} (-3 \mathbin{@} 6))\big)$, and $C=\big(((-10 \mathbin{!} -1) \mathbin{!} (-5+{-}6)) \mathbin{@} ((-8+5)\cdot(-10+10))\big) \mathbin{;} \big(((-3-3) \mathbin{;} (2 \mathbin{\text{[]}} -5)) - ((2\cdot 9)-(-8 \mathbin{\text{[]}} 2))\big)$; compute $A+B-C$ (answer as a number).&-10038\\ \hline
\textbf{CLadder} & Context: For husbands that don't set the alarm, the probability of ringing alarm is 42\%. For husbands that set the alarm, the probability of ringing alarm is 51\%. Question: For husbands that set the alarm, would it be less likely to see ringing alarm if the husband had not set the alarm? & yes \\ \hline
\textbf{LegalBench} &Context: Overall, the percentages and correlation coefficients in nine of the ten endogenous are sufficient for purposes of the second Gingles precondition. See 478 U.S. at 56. Plaintiffs have shown minority political cohesiveness in these [**105] appellate-judgeship elections. Nonetheless, the stark statistic remains that, in Alabama's history, only two African Americans have been elected to statewide office for a total of [**186] three general elections and three primary elections. At the statewide level, this factor weighs in favor of Plaintiffs. Question: Based on the excerpt above, did the judge's legal reasoning rely on statistical evidence (e.g., regression analysis) when determining whether there was a causal link? Answer Yes or No & Yes \\ \hline
\textbf{LogiQA} & Some purple clay pots are alive. Therefore, some living things have good or bad quality.Question: Which of the following judgments, if true, would best strengthen the argument?Options: ['The quality of purple clay teapots is different from good to bad.', 'Some purple clay pots are inanimate.', 'There is no difference in the quality of purple clay teapots.', 'Some living things are not purple clay pots.'] & 0, The quality of purple clay teapots is different from good to bad. \\
\bottomrule[1pt]
\end{tabular} }\vspace{-2mm}
\label{tab:dataset-examples2}
\end{table}

\section{The Use of Large Language Models (LLMs)}
LLMs are the primary subject of our reinforcement learning (RL) experiments: we fine-tune pretrained models under CE-PO and baseline objectives to study stability, reward hacking, and causal alignment. Separately, we use LLMs as utilities for controlled text transformations—rephrasing, grammar correction, and during writing phase. 

\end{document}